\documentclass{article}




    \usepackage[final,nonatbib]{neurips_2021}


\usepackage[utf8]{inputenc} 
\usepackage[T1]{fontenc}    
\usepackage{hyperref}       
\usepackage{url}            
\usepackage{booktabs}       
\usepackage{amsfonts}       
\usepackage{nicefrac}       
\usepackage{microtype}      
\usepackage{xcolor}         
\usepackage{bm}
\usepackage{subcaption}

\usepackage{wrapfig}

\usepackage{stfloats}
\fnbelowfloat 

\usepackage{enumitem}
\usepackage[linesnumbered,algoruled,boxed,lined,algo2e,norelsize]{algorithm2e}

\usepackage{tikz}
\usepackage{subcaption}
\usepackage{amsthm}
\usepackage{amsmath}
\usepackage{amssymb}

\newtheorem{assume}{Assumption}

\newtheorem{theorem}{Theorem}

\newtheorem*{theorem-non}{Theorem}

\DeclareMathOperator{\X}{\mathcal{X}}

\DeclareMathOperator*{\argmax}{arg\,max}
\DeclareMathOperator*{\minimize}{minimize}

\title{Hyperparameter Tuning is All You Need for LISTA}

%

\author{
Xiaohan Chen$^1$\thanks{The first two authors made equal contributions.} \quad Jialin Liu$^2$\footnotemark[1] \quad Zhangyang Wang$^1$ \quad Wotao Yin$^2$ \\ \\
$^1$University of Texas at Austin \quad $^2$Alibaba US, Damo Academy \\
\texttt{\{xiaohan.chen, atlaswang\}@utexas.edu} \\
\texttt{\{jialin.liu, wotao.yin\}@alibaba-inc.com}
}

\begin{document}

\maketitle

\begin{abstract}
Learned Iterative Shrinkage-Thresholding Algorithm (LISTA) introduces the concept of unrolling an iterative algorithm and training it like a neural network. It has had great success on sparse recovery.
In this paper, we show that adding momentum to intermediate variables in the LISTA network achieves a better convergence rate and, in particular, the network with instance-optimal parameters is superlinearly convergent.
Moreover, our new theoretical results lead to a practical approach of automatically and adaptively calculating the parameters of a LISTA network layer based on its previous layers. Perhaps most surprisingly, such an adaptive-parameter procedure reduces the training of LISTA to tuning \textbf{only three hyperparameters} from data: a new record set in the context of the recent advances on trimming down LISTA complexity. We call this new ultra-light weight network \textit{HyperLISTA}. Compared to state-of-the-art LISTA models, HyperLISTA achieves almost the same performance on seen data distributions and performs better when tested on unseen distributions (specifically, those with different sparsity levels and nonzero magnitudes). Code is available: \url{https://github.com/VITA-Group/HyperLISTA}. 
\end{abstract}

\section{Introduction}
\label{sec:intro}

In this paper, we study the sparse linear inverse problem, where we strive to recover an unknown sparse vector $\bm{x}^* \in \mathbb{R}^n$ from its noisy linear measurement $\bm{b}$ generated from
\begin{equation}
\label{eq:linear_model}
  \bm{b} = \bm{A} \bm{x}^* + \varepsilon,
\end{equation}
where $\bm{b}\in\mathbb{R}^m$ is the measurement that we observe, $\bm{A} \in \mathbb{R}^{m \times n}$ is the \emph{dictionary}, $x^*\in\mathbb{R}^n$ is the unknown ground truth that we aim to recover, and $ \varepsilon\in\mathbb{R}^m$
is additive Gaussian white noise. For simplicity, each column of $\bm{A}$, is normalized to have unit $\ell_2$ norm.
Typically, we have much fewer rows than columns in the dictionary $\bm{A}$, i.e., $ m \ll n $. Therefore, Equation (\ref{eq:linear_model}) is an under-determined system.
The sparse inverse problem, also known as sparse coding, plays essential roles in a wide range of applications including feature selection, signal reconstruction and pattern recognition.

Sparse linear inverse problems are well studied in the literature of optimization. For example, it can be formulated into LASSO \cite{tibshirani1996regression} and solved by many optimization algorithms \cite{daubechies2004iterative,beck2009fast}.
These solutions explicitly consider and incorporate the sparsity prior and usually exploit iterative routines.
Deep-learning-based approaches are proposed for empirically solving inverse problems recently \cite{ongie2020deep}, 
which produce black-box models that are trained with data in an end-to-end way, while somehow ignoring the sparsity prior.
Comparing the two streams, the former type, i.e., the classic optimization methods, takes hundreds to thousands of iterations to generate accurate solutions, while black-box deep learning models, if properly trained, can achieve similar accuracy in tens of layers. However, classic methods come with data-agnostic convergence (rate) guarantees under suitable conditions, whereas deep learning methods only empirically work for instances similar to the training data, lacking theoretical guarantees and interpretability. More discussions are found in \cite{chen2021learning}.

\textit{Unrolling} is an uprising approach that bridges the two streams \cite{monga2019algorithm,meng2020design}. By converting each iteration of a classic iterative algorithm into one layer of a neural network, one can \textit{unroll} and truncate a classic optimization method into a feed-forward neural network, with a finite number of layers.
Relevant parameters (e.g., the dictionary, step sizes and thresholds) in the original algorithm are transformed into learnable weights, that are trained on data. \cite{gregor2010learning} pioneered the unrolling scheme for solving sparse coding and achieved great empirical success. By unrolling the Iterative Shrinkage-Thresholding Algorithm (ISTA), the authors empirically demonstrated up to two magnitudes of acceleration of the learned ISTA (LISTA) compared to the original ISTA. The similar unrolling idea was later extended to numerous optimization problems and algorithms \cite{sprechmann2013supervised,wang2016learning,Perdios_Besson_Rossinelli_Thiran_2017,Giryes_Eldar_Bronstein_Sapiro_2018,zhou2018sc2net,ablin2019learning,Hara_Chen_Washio_Wazawa_Nagai_2019,cowen2019lsalsa,cai2021learned}.

\subsection{Related Works}

The balance between empirical performance and interpretability of unrolling motivates a line of theoretical efforts to explain their acceleration success. \cite{moreau2016understanding} attempted to understand LISTA by re-factorizing the Gram matrix of the dictionary $\bm{A}$ in (\ref{eq:linear_model})
and proved that such re-parameterization converges sublinearly and empirically showed that it achieved similar acceleration gain to LISTA. \cite{xin2016maximal} investigated unrolling iterative hard thresholding (IHT) and proved the existence of a matrix transformation that can improve the restricted isometry property constant of the original dictionary. 
Although it is difficult to search for the optimal transformation using classic optimization techniques, the authors of \cite{xin2016maximal} argued that the data-driven training process is able to learn that transformation from data, and hence achieving the acceleration effect in practice.

The work \cite{chen2018theoretical} extended the weight coupling necessary condition in \cite{xin2016maximal} to LISTA. The authors for the first time proved the existence of a learnable parameter set that guarantees linear convergence of LISTA.
Later in \cite{liu2019alista}, they further showed that the weight matrix in \cite{chen2018theoretical} can be analytically derived from the dictionary $\bm{A}$ and free from learning, reducing the learnable parameter set to only tens of scalars (layer-wise step sizes and thresholds). The resulting algorithm, called Analytic LISTA (ALISTA), sets one milestone in simplifying LISTA. \cite{Wu2020Sparse} extends ALISTA by introducing gain and overshoot gating mechanisms and integrate them into unrolling.
\cite{ziang2021learned} proposed error-based thresholding (EBT), which leverages a function of the layer-wise reconstruction error to suggest an appropriate threshold value for each observation on each layer. 
Another interesting work \cite{behrens2021neurally} uses a black-box LSTM model that takes the historical residual and update quantity as input and generates layerwise step sizes and thresholds in ALISTA. 
Their proposed model, called Neurally Augmented ALISTA (NA-ALISTA), performed well in large-scale problems.

\subsection{Our Contributions}

Our research question is along the line of \cite{liu2019alista,ziang2021learned,behrens2021neurally}: \textit{can we improve the LISTA parameterization, by further disentangling the learnable parameters from the observable terms (e.g., reconstruction errors, intermediate outputs)}? Our aim is to create more light-weight, albeit more robust LISTA models with ``baked in" adaptivity to testing samples from unseen distributions. The goal should be achieved without sacrificing our previously attained benefits: empirical performance, and convergence (rate) guarantees -- if not improving them further.

We present an affirmative answer to the above question by learning instance-optimal parameters with a new ALISTA parameterization. 
\underline{First}, we prove that by augmenting ALISTA with momentum, the new unrolled network can achieve a better convergence rate over ALISTA. 
\underline{Second}, we find that once the above LISTA network is further enabled with instance-optimal parameters, then a superlinear convergence can be attained. 
\underline{Lastly}, we show that those instance-optimal parameters of each layer can be generated in closed forms, consequently reducing ALISTA to only learning \textbf{three instance- and layer-invariant hyperparameters}. 

Our new ultra-light weight network is termed as \textit{HyperLISTA}. Compared to ALISTA, the new parameterization of HyperLISTA facilitates new backpropagation-free ``training" options (e.g., bayesian optimization), and can unroll to more iterations in testing than training thanks to its layer-wise decomposed form. When comparing to state-of-the-art LISTA variants in experiments, HyperLISTA achieves the same strong performance on seen data distributions, and performs better when tested on unseen distributions (e.g., with different sparsity levels and nonzero magnitudes). In the supplementary materials, we also show that the superlinear convergence can be observed in HyperLISTA experiments, when our automatic parameter tuning is performed per instance.

\section{Assumptions and Preliminaries}
\label{sec:pre}

\textbf{Mathematical Notations:} 
We represent a matrix as a capital bold letter $\bm{W}$, $\mathrm{diag}(\bm{W})$ means the diagonal elements of matrix $\bm{W}$ and the pseudoinverse of a matrix is denoted by $\bm{W}^{+}$. A vector is represented as a lowercase bold letter $\bm{x}$ and its entries are represented as lowercase regular letters with lower indices $\bm{x} = [x_1,x_2,\cdots,x_3]^T$. Upper indices represent iterations or layers.
Moreover, $\bm{0},\bm{1}$ represent a vector with all zeros or all ones, respectively. ``$\mathrm{supp}(\bm{x})$" denotes the support, the indices of all nonzeros in vector $\bm{x}$.

\textbf{ISTA Algorithm:}
With an $\ell_1$ penalty term, the minimizer of LASSO, $\mathrm{argmin}_{\bm{x}}\frac{1}{2}\|\bm{A}\bm{x}-\bm{b}\|^2_2 + \lambda \|\bm{x}\|_1$, provides an estimate of the solution of the sparse recovery problem (\ref{eq:linear_model}). A popular algorithm to solve LASSO is Iterative Shrinkage-Thresholding Algorithm (ISTA):
\[ \bm{x}^{(k+1)} = \eta_{\lambda/L}\Big(\bm{x}^{(k)} + \frac{1}{L}\bm{A}^T(\bm{b}-\bm{A}\bm{x}^{(k)})\Big),\quad k = 0, 1, 2, \ldots\]
where $\eta_{\lambda/L}$ is the element-wise soft-thresholding function $\eta_{\theta}(\bm{x}) = \text{sign}(\bm{x})\max(0,|\bm{x}|-\theta)$ with $\theta=\lambda/L$ and $L$ is usually taken as the largest eigenvalue of $\bm{A}^T\bm{A}$.

\textbf{LISTA Model:}
The original LISTA work \cite{gregor2010learning} parameterizes ISTA as 
\begin{equation}
    \label{eq:vanilla-lista}
    \bm{x}^{(k+1)} = \eta_{\theta^{(k)}}\Big(\bm{W}_1^{(k)} \bm{x}^{(k)} + \bm{W}_2^{(k)}\bm{b} \Big), 
    \quad k = 0,1,2,\dots
\end{equation}
where $\Theta=\{\bm{W}_1^{(k)},\bm{W}_2^{(k)},\theta^{(k)}\}$ are learnable parameters to train from data. In the training process, iterations formulated in (\ref{eq:vanilla-lista}) are unrolled and truncated to $K$ steps and the parameters are trained by minimizing the following loss function defined over the distribution of training samples $D_\mathrm{tr}$:
\begin{equation}
    \label{eq:loss-lista}
    \minimize_{\Theta} \mathbb{E}_{(\bm{x}^*, \bm{b})\sim D_\mathrm{tr}} \Big\|\bm{x}^{(K)}(\Theta,\bm{b},\bm{x}^{(0)})-\bm{x}^*\Big\|^2_2.
\end{equation}
Here $\bm{x}^{(K)}(\Theta,\bm{b},\bm{x}^{(0)})$, the output of the LISTA model, is a function with input $\Theta$, $\bm{b}$ and $\bm{x}^{(0)}$. In practice, we usually set $\bm{x}^{(0)}=\bm{0}$. We will use $\bm{x}^{(K)}$ to refer to $\bm{x}^{(K)}(\Theta,\bm{b},\bm{x}^{(0)})$ for brevity in the remainder of this paper.
$\bm{x}^*$ denotes the underlying ground truth defined in (\ref{eq:linear_model}). Although it cannot be accessed in iterative algorithm (\ref{eq:vanilla-lista}), it can be used as the label in supervised training (\ref{eq:loss-lista}).

\textbf{ALISTA Model:}
ALISTA \cite{liu2019alista} advances LISTA with a lighter and more interpretable parameterization, where the learnable parameters are $\Theta=\{\gamma^{(k)},\theta^{(k)}\}$ and the iterations are formulated as
\begin{equation}
    \label{eq:alista}
    \bm{x}^{(k+1)} = \eta_{\theta^{(k)}}^{p^{(k)}}\Big(\bm{x}^{(k)} + \gamma^{(k)} \bm{W}^T(\bm{b} - \bm{A} \bm{x}^{(k)}) \Big),
\end{equation}
where $\bm{W}$ is calculated by solving the following optimization problem: 
\begin{equation}
    \label{eq:W-alista}
    \bm{W} \in \underset{\bm{W}\in\mathbb{R}^{m\times n}}{\mathrm{argmin}}
        \left\| \bm{W}^T \bm{A} \right\|^2_F, \quad \text{s.t. diag}(\bm{W}^T \bm{A}) = \mathbf{1}.
\end{equation}
The thresholding operator is improved with support selection. At layer $k$, a certain percentage of entries with largest magnitudes are trusted as ``true support'' and will not be passed through thresholding. Specifically, the $i$-th element of $\eta_{\theta^{(k)}}^{p^{(k)}}(\bm{v})$ is defined as
\begin{equation}
    \label{eq:ss-operator}
    (\eta_{\theta^{(k)}}^{p^{(k)}}(\bm{v}))_i = \left\{
  \begin{array}{lll}
    v_i & : v_i > \theta^{(k)},& i\in S^{p^{(k)}}(\bm{v}), \\
    v_i - \theta^{(k)} & : v_i > \theta^{(k)},& i\notin S^{p^{(k)}}(\bm{v}), \\
    0 & : -\theta^{(k)} \leq v_i \leq \theta^{(k)} &\\
    v_i + \theta^{(k)} & : v_i < -\theta^{(k)},& i\notin S^{p^{(k)}}(\bm{v}), \\
    v_i & : v_i < -\theta^{(k)}, &i\in S^{p^{(k)}}(\bm{v}),
  \end{array}
\right.
\end{equation}
where $S^{p^{(k)}}(\bm{v})$ includes the elements with the largest $p^{(k)}$  magnitudes in vector $\bm{v}\in\mathbb{R}^n$:
\begin{equation}
\label{eq:spk}
S^{p^{(k)}}(\bm{v}) = \Big\{i_1,i_2,\cdots,i_{p^{(k)}}\Big||v_{i_1}| \geq |v_{i_2}| \geq \cdots |v_{i_{p^{(k)}}}|\cdots \geq |v_{i_n}|\Big\}.
\end{equation}
With $p^{(k)}=0$, the operator reduces to the soft-thresholding and, with $p^{(k)}=n$, the operator reduces to the hard-thresholding. Thus, operator $\eta_{\theta^{(k)}}^{p^{(k)}}(\bm{v})$ is actually a balance between soft- and hard-thresholding. When $k$ is small, $p^{(k)}$ is usually chosen as a small fraction of $n$ and the operator tends to not trust the signal, and vice versa.
In ALISTA, $p^{(k)}$ is proportional to the index $k$, capped by a maximal value, i.e., $p^{(k)} = \min(p\cdot k, p_{\mathrm{max}})$, where $p$ and $p_{\mathrm{max}}$ are two hyperparameters that will be tuned manually.
Later in this work, we will consider $p^{(k)}$ as a parameter that could be adaptively calculated by the information from previous iterations (layers).

\textbf{More Notations and Assumptions:}
Throughout this paper, we refer to ``parameters'' as the learnable weights in the models, such as the thresholds $\theta^{(k)}$ and the step sizes $\gamma^{(k)}$ that are defined in the ALISTA model, which are usually trained by back-propagation.
In contrast, we refer ``hyperparameters'' as those constants that are often ad-hoc pre-defined to calculate parameters. For example, $\theta^{(k)} = c_1 \|\bm{A}^{+}(\bm{A}\bm{x}^{(k)}-\bm{b})\|_1$, where $\theta^{(k)}$ is a parameter and $c_1$ is a hyperparameter. 
    
\begin{assume}[Basic assumptions]
\label{assume:basic}
The signal $\bm{x}^\ast$ and noise $\varepsilon$ are sampled from the following set: 
\begin{equation}
\label{eq:x_assume}
\X(B,\underline{B}, s, \sigma) \triangleq  \Big\{(\bm{x}^*,\varepsilon) \Big| \|\varepsilon\|_1\leq \sigma,~~ \|\bm{x}^\ast\|_0 \leq s, ~ 0< \underline{B} \leq |\bm{x}^\ast_i | \leq B,  {~\forall i\in \mathrm{supp}(\bm{x}^*)} \Big\}.
\end{equation}
In other words, $\bm{x}^\ast$ is bounded and sparse.
$\sigma>0$ is the noise level.
\end{assume}
Compared with the Assumption 2 in \cite{chen2018theoretical}, the assumption of ``$|\bm{x}^*_i| \geq \underline{B} > 0$'' for non-zero elements in $x^*$ is slightly stronger. But we will justify (after Theorem \ref{prop:alista-momentum}) that uniformly under Assumption \ref{assume:basic} in this paper, our method also achieves a better convergence rate than state-of-the-arts.

\vspace{-0.5em}
\section{Methodologies and Theories}
\vspace{-0.5em}

In this section, we extend the ALISTA (\ref{eq:alista}) in three aspects. First, we introduce a better method to calculate matrix $\bm{W}$ in formula (\ref{eq:W-alista}). Second, we augment the x-update formula (\ref{eq:alista}) by adding a momentum term. Finally, we propose an adaptive approach to obtain the parameters $p^{(k)}, \theta^{(k)}, \gamma^{(k)}$.

\subsection{Preparation: Symmetric Jacobian of Gradients}
\label{sec:symmetric}

Before improving the formula (\ref{eq:W-alista}), we first briefly explain its main concepts and shortcomings. Mutual coherence is a key concept in compressive sensing\cite{donoho2001uncertainty,donoho2005stable}. 
For matrix $\bm{A}$, it is defined as $\max_{i \neq j}|(\bm{A}^T\bm{A})_{i,j}|$ where $(\bm{A}^T\bm{A})_{i,i}=1$ since each column of $\bm{A}$ is assumed with unit $\ell_2$ norm. A matrix with low mutual coherence satisfies that $\bm{A}^T\bm{A}$ approximates the identity matrix, thus implying a recovery of high probability.
In ALISTA \cite{liu2019alista}, the authors extended this concept to a relaxed bound between matrix $\bm{A}$ and another matrix $\bm{W}$: $\|\bm{W}^T\bm{A}\|_F$ (described in (\ref{eq:W-alista})). By minimizing this bound, they obtained a good matrix $\bm{W}$ that can be plugged into the LISTA framework.

One clear limitation of ALISTA is that, with $\bm{W}$ obtained by (\ref{eq:W-alista}), the model is only able to converge to $\bm{x}^*$ but not the LASSO minimizer \cite{ablin2019learning}. Consequently, one has to train ALISTA with known $\bm{x}^*$ in a supervised way (\ref{eq:loss-lista}).  This is partially due to the fact that the update direction $\bm{W}^T(\bm{A}\bm{x} - \bm{b})$ in ALISTA is not aligned with the gradient of the $\ell_2$ term in the LASSO objective: $\nabla_{\bm{x}} \frac{1}{2} \|\bm{A}\bm{x}-\bm{b}\|^2_2 = \bm{A}^T(\bm{A}\bm{x} - \bm{b})$. This limitation motivates us to adopt a new symmetric Jacobian parameterization so that $\bm{W}^T\bm{A}$ is symmetric and the coherence between $\bm{A}$ and $\bm{W}$ is minimized.

Inspired by \cite{aberdam2020ada}, we define $\bm{W} = (\bm{G}^T \bm{G}) \bm{A}$ (the matrix $\bm{G}\in \mathbb{R}^{m\times m}$ is named as the Gram matrix), and get the following problem instead of (\ref{eq:W-alista}):
\begin{equation}
    \label{eq:w-symmetric}
    \min_{\bm{G}} \|\bm{A}^T\bm{G}^T\bm{G}\bm{A}-\bm{I}\|^2_F,\quad
    \text{s.t. diag}(\bm{A}^T\bm{G}^T\bm{G}\bm{A}) = \mathbf{1}.
\end{equation}
However, the constraint in the above problem (\ref{eq:w-symmetric}) is hard to handle. Thus, we introduce an extra matrix $\bm{D} = \bm{G}\bm{A} \in \mathbb{R}^{m\times n}$ and use the following method instead to calculate dictionary:
    \begin{equation}
        \label{eq:w-new}
        \min_{\bm{G},\bm{D}} \|\bm{D}^T\bm{D}-\bm{I}\|^2_F + \frac{1}{\alpha}\|\bm{D}-\bm{G}\bm{A}\|^2_F,\quad
    \text{s.t. diag}(\bm{D}^T \bm{D}) = \mathbf{1}.
    \end{equation}
With proper $\alpha>0$, the solution of (\ref{eq:w-new}) approximates (\ref{eq:w-symmetric}) well.  We use an adaptive rule to choose $\alpha$ and adopt a variant of the algorithm (described in details in the supplement) proposed in \cite{lu2018optimized} to solve (\ref{eq:w-new}). 
    Such formula leads to a 
     symmetric matrix $\bm{W}^T\bm{A} = (\bm{G}\bm{A})^T(\bm{G}\bm{A})$. With this symmetry, the vector $\bm{W}^T(\bm{A}\bm{x}-\bm{b})$ is actually the gradient of function $\frac{1}{2}\|\bm{G}(\bm{A}\bm{x}-\bm{b})\|^2_2$ with respect to $\bm{x}$. One may train model (\ref{eq:alista}) using methods in \cite{ablin2019learning} with the following loss function without knowing the ground truth: $F(\bm{x}) = \frac{1}{2}\|\bm{G}(\bm{A}\bm{x} - \bm{b})\|^2_2 + \lambda\|\bm{x}\|_1$.
     On the other hand, it usually holds that 
     \begin{equation}
         \label{eq:coherence-equal}
         \|\bm{W}^T\bm{A} - \bm{I}\|_F \approx \|(\bm{G}\bm{A})^T(\bm{G}\bm{A}) - \bm{I}\|_F,
     \end{equation}
     where $\bm{W}$ is calculated by (\ref{eq:W-alista}) and $\bm{G}$ is calculated by (\ref{eq:w-new}). We validate (\ref{eq:coherence-equal}) by numerical results in Section \ref{sec:sym-valication}.
     Conclusion (\ref{eq:coherence-equal}) demonstrates that the mutual coherence between $\bm{A}$ and $(\bm{G}^T\bm{G})\bm{A}$
     is similar to that between $\bm{A}$ and $\bm{W}$ obtained by (\ref{eq:W-alista}). Although we limit the matrix $\bm{W}^T\bm{A}$ to be symmetric, (\ref{eq:w-new}) hardly degrades any performance of the ALISTA framework compared to (\ref{eq:W-alista}).

\subsection{ALISTA with Momentum: Improved Linear Convergence Rate}

As is well known in the optimization community, adding a momentum term can accelerate many iterative algorithms.
Two classic momentum algorithms were respectively proposed by Polyak \cite{polyak1964some} and Nesterov \cite{nesterov1983method}. Nesterov's algorithm alternates between the gradient update and the extrapolation, while Polyak's algorithm can be written as an iterative algorithm within the space of gradient updates by simply adding a momentum term. To avoid extra training overhead, we use Polyak's heavy ball scheme and add a momentum term to the ALISTA formula for $k \geq 1$\footnote{As $k=0$, we keep the ALISTA formula since $\bm{x}^{(-1)}$ is not defined.} (illustrated in Figure~\ref{fig:alista}): 
\begin{equation}
    \label{eq:alista-momentum}
    \bm{x}^{(k+1)} = \eta_{\theta^{(k)}}^{p^{(k)}} \Big(\bm{x}^{(k)} + \gamma^{(k)} \bm{W}^T(\bm{b} - \bm{A} \bm{x}^{(k)}) + \beta^{(k)}(\bm{x}^{(k)}-\bm{x}^{(k-1)})\Big)
\end{equation}
Note that there are other LISTA variants that incorporated additional momentum forms \cite{moreau2016understanding,xiang2021fista}, but they did not contribute to provably accelerated convergence rate. 
Defining constant $\mu$ as the mutual coherence of $\bm{D}$, i.e., $\mu := \max_{i\neq j}|(\bm{D}_{:,i})^T\bm{D}_{:,j}|$, we have the following theorem, which demonstrates that, in the context of (A)LISTA, model (\ref{eq:alista-momentum}) can provide a faster linear convergence rate, than that of ALISTA in the same settings.

\begin{figure}[t]
    \vspace{-0.6em}
    \centering
    \includegraphics[width=0.85\textwidth]{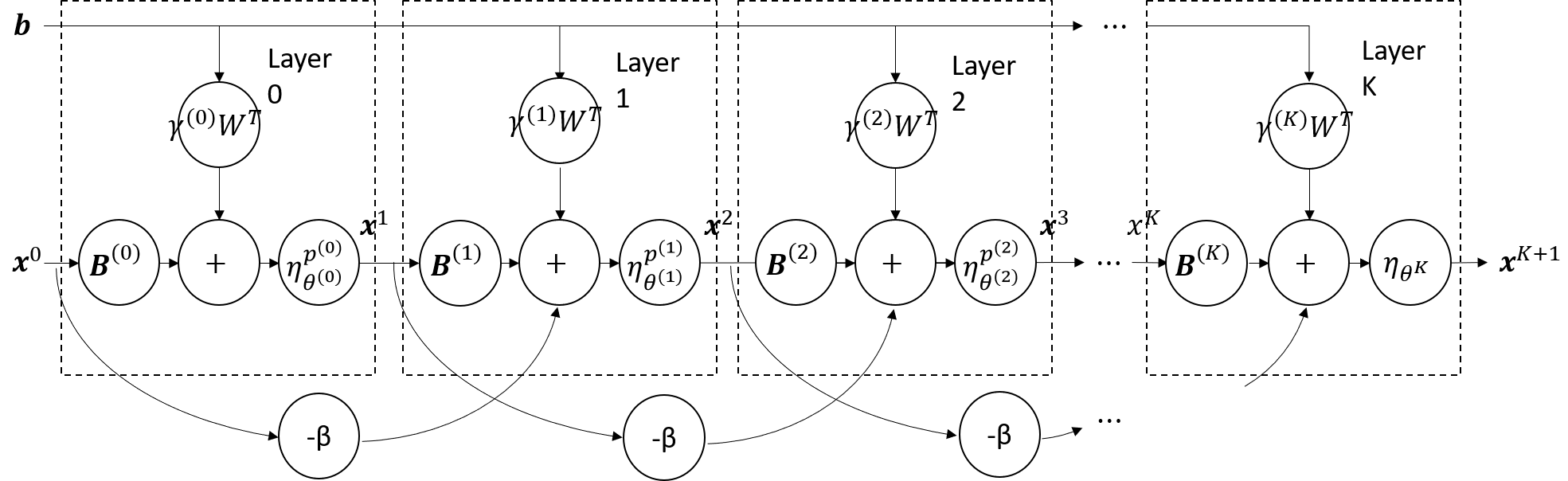}
    \vspace{-0.6em}
    \caption{\small{ALISTA with momentum as formulated in (\ref{eq:alista-momentum}), with $\bm{B}^{(0)}=\bm{I}-\gamma^{(0)}\bm{W}^T\bm{A}, \bm{B}^{(k)}=\big(1+\beta^{(k)}\big)\bm{I}-\gamma^{(k)}\bm{W}^T\bm{A}$ ($k \geq 1$). The momentum creates extra skip connections at the bottom.}}
    \vspace{-1em}
    \label{fig:alista}
\end{figure}

\begin{theorem}[Convergence of ALISTA-Momentum]
\label{prop:alista-momentum}
Let $\bm{x}^{(0)}=\bm{0}$ and $\{\bm{x}^{(k)}\}_{k=1}^{\infty}$ be generated by (\ref{eq:alista-momentum}). If Assumption \ref{assume:basic} holds and $s < (1+1/\mu)/2$ and $\sigma$ is small enough, then there exists a uniform sequence of parameter $\{\theta^{(k)}, p^{(k)}, \gamma^{(k)}, \beta^{(k)}\}_{k=1}^{\infty}$ for all $(\bm{x}^*,\varepsilon)\in \X(B,\underline{B}, s, \sigma)$ such that
\begin{equation}
\label{eq:linear_conv}
  \|\bm{x}^{(k)}-\bm{x}^\ast\|_2 \leq C_0 \prod_{t=0}^{k}c^{(t)} {+\frac{2sC_W}{1-2\mu s + \mu}\sigma},\quad \forall k = 1,2,\cdots,
\end{equation}
where $C_0>0$ is a constant depending on $s,B,\mu$ and $C_W = \max_{1\leq i,j\leq n}|\bm{W}_{i,j}|$. The convergence rate $c^{(k)}>0$ satisfies: 
\begin{align}
  c^{(k)} &\leq  2\mu s - \mu < 1  ,\quad \forall k \label{eq:ck_bound}\\
  c^{(k)} &\leq 1 - \sqrt{1-2\mu s + \mu} < 2\mu s - \mu, \quad \forall k \geq \frac{\log(\underline{B})-\log(2C_0)}{\log(2\mu s - \mu)} \label{eq:ck_eventual_bound}
\end{align}
and the parameter sequence satisfies
\begin{equation}
\label{eq:para_seq}
    \begin{aligned}
        \theta^{(k)} =& \mu \sup_{(\bm{x}^*,\varepsilon)\in \X(B,\underline{B}, s, \sigma)} \Big\{ \|\bm{x}^{(k)}
        -\bm{x}^\ast\|_1 {+C_W\sigma} \Big\}, \quad \gamma^{(k)}=1, \quad &\forall k \\
        \beta^{(k)}\to& \big(1 - \sqrt{1-2\mu s + \mu}\big)^2,\quad p^{(k)} \to n, &\mathrm{as } ~k\to\infty 
    \end{aligned}
\end{equation}
\end{theorem}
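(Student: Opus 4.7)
The plan is to adapt the induction-based convergence analysis of ALISTA (from \cite{chen2018theoretical,liu2019alista}) to the momentum-augmented iteration (\ref{eq:alista-momentum}), and then bolt on a heavy-ball / characteristic-polynomial argument to squeeze out the accelerated rate. With $\gamma^{(k)}=1$, the pre-thresholded quantity
\begin{equation*}
\bm{v}^{(k)} := \bm{x}^{(k)} + \bm{W}^T(\bm{b}-\bm{A}\bm{x}^{(k)}) + \beta^{(k)}(\bm{x}^{(k)}-\bm{x}^{(k-1)})
\end{equation*}
satisfies
\begin{equation*}
\bm{v}^{(k)} - \bm{x}^\ast = \bigl((1+\beta^{(k)})\bm{I}-\bm{W}^T\bm{A}\bigr)(\bm{x}^{(k)}-\bm{x}^\ast) - \beta^{(k)}(\bm{x}^{(k-1)}-\bm{x}^\ast) + \bm{W}^T\varepsilon,
\end{equation*}
so the update is a two-step linear recursion driven by the near-identity matrix $\bm{W}^T\bm{A}$, followed by the nonlinear operator $\eta^{p^{(k)}}_{\theta^{(k)}}$.

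First I would inductively prove the \emph{no-false-positives} invariant $\mathrm{supp}(\bm{x}^{(k)})\subseteq \mathrm{supp}(\bm{x}^\ast)$. Using $\mathrm{diag}(\bm{W}^T\bm{A})=\mathbf{1}$ and the mutual coherence bound $|(\bm{W}^T\bm{A}\bm{u})_i|\le \mu\|\bm{u}\|_1$ for $i\notin\mathrm{supp}(\bm{u})$, the off-support entries of $\bm{v}^{(k)}$ are bounded in magnitude by $\mu\|\bm{x}^{(k)}-\bm{x}^\ast\|_1 + C_W\sigma$, which is exactly the choice of $\theta^{(k)}$ prescribed in (\ref{eq:para_seq}); hence those entries are either zeroed out or, if $p^{(k)}>0$, are below the top-$p^{(k)}$ threshold against on-support entries. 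This is essentially the ALISTA support argument, but now the momentum term $\beta^{(k)}(\bm{x}^{(k)}-\bm{x}^{(k-1)})$ is supported on $\mathrm{supp}(\bm{x}^\ast)$ by the inductive hypothesis, so it does not disturb the off-support bound.

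Once support control is in hand, restricting to $\mathrm{supp}(\bm{x}^\ast)$ (of size $\leq s$) and taking $\ell_1$-norms yields a scalar two-step inequality
\begin{equation*}
e^{(k+1)} \leq \bigl|1+\beta^{(k)}-1\bigr|\cdot e^{(k)} + (2s-1)\mu\, e^{(k)} + \beta^{(k)}\, e^{(k-1)} + \text{noise}, \qquad e^{(k)}:=\|\bm{x}^{(k)}-\bm{x}^\ast\|_1,
\end{equation*}
where the $(2s-1)\mu$ factor comes from $|((\bm{I}-\bm{W}^T\bm{A})\bm{u})_i|\le \mu(\|\bm{u}\|_1-|u_i|)$ for $i\in\mathrm{supp}(\bm{u})$. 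Bounding the spectral radius of the $2\times 2$ companion matrix of this recursion and minimizing over $\beta^{(k)}$ is a standard Polyak heavy-ball computation: letting $q=2\mu s-\mu$, the unmomentumed ($\beta^{(k)}=0$) choice recovers the ALISTA rate $q$, while the optimum $\beta^{(k)}=(1-\sqrt{1-q})^2$ gives the complex-conjugate-eigenvalue regime with common modulus $1-\sqrt{1-q}=1-\sqrt{1-2\mu s+\mu}$. This directly yields both (\ref{eq:ck_bound}) (valid from the start, with a conservative $\beta^{(k)}$) and (\ref{eq:ck_eventual_bound}) (after one tunes $\beta^{(k)}$ near its limiting value). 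The $\ell_2$ statement (\ref{eq:linear_conv}) follows from $\|\cdot\|_2\leq \|\cdot\|_1$ and collecting the geometric noise series $\sum_t (2\mu s-\mu)^t C_W\sigma$ into $\tfrac{2sC_W}{1-2\mu s+\mu}\sigma$. The threshold $k\geq (\log\underline{B}-\log 2C_0)/\log(2\mu s-\mu)$ for switching to the faster rate is exactly the iteration count at which $C_0 q^k$ falls below $\underline{B}/2$, at which point every true-support entry of $\bm{v}^{(k)}$ exceeds the thresholding slack, allowing $p^{(k)}\to n$.

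The main obstacle is handling the interplay of the nonlinear support-selection operator with a genuinely two-step recursion: the usual LISTA induction is one-step, so one must either track the pair $(e^{(k)},e^{(k-1)})$ jointly or design a single Lyapunov quantity (e.g., a weighted sum $e^{(k)}+\alpha e^{(k-1)}$) whose contraction factor matches the companion-matrix spectral radius. A secondary subtlety is that the instance-uniform $\theta^{(k)}$ defined via $\sup_{(\bm{x}^\ast,\varepsilon)\in\X}$ must not over-shrink true-support entries, which is where Assumption \ref{assume:basic}'s lower bound $|\bm{x}^\ast_i|\geq \underline{B}$ is essential: it guarantees a safety margin between $\theta^{(k)}$ (which decays with $e^{(k)}$) and the on-support magnitudes, and it fixes the finite number of iterations after which $p^{(k)}\to n$ becomes admissible.
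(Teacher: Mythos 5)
Your setup (two-step error recursion, no-false-positives induction, the role of $\underline{B}$ in eventually making all true-support entries survive the threshold) matches the paper, but the central step of your plan fails. You propose to scalarize the recursion into
$e^{(k+1)} \leq \beta^{(k)} e^{(k)} + (2s-1)\mu\, e^{(k)} + \beta^{(k)} e^{(k-1)} + \text{noise}$
and then read off the accelerated rate from the companion matrix of this inequality. That companion matrix, $\bigl[\begin{smallmatrix} \beta+q & \beta \\ 1 & 0\end{smallmatrix}\bigr]$ with $q=2\mu s-\mu$, has nonnegative entries; its spectral radius is the positive root of $\lambda^2-(\beta+q)\lambda-\beta=0$, which equals $q$ at $\beta=0$ and is \emph{strictly increasing} in $\beta$. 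No choice of $\beta$ gives complex eigenvalues or a modulus below $q$ once you have taken norms and absolute values, because the heavy-ball cancellation lives in the sign of the $-\beta\bm{I}$ block of the true iteration matrix $\bigl[\begin{smallmatrix}(1+\beta)\bm{I}-\bm{Q} & -\beta\bm{I}\\ \bm{I} & \bm{0}\end{smallmatrix}\bigr]$, which the triangle inequality destroys. The same objection applies to any Lyapunov function of the form $e^{(k)}+\alpha e^{(k-1)}$ built on that inequality. This is why the paper does something structurally different in the accelerated phase: it first shows (using $\underline{B}$, the decay of $\theta^{(k)}$, and $p^{(k)}=n$) that after a finite $\widetilde{K}_0$ the thresholding is \emph{exactly inactive} ($\bm{v}^{(k)}=\bm{0}$), so the iteration becomes a genuinely linear two-step recursion on the support; it then diagonalizes the $2|S|\times 2|S|$ block companion matrix $\mathbf{M}=\mathbf{T}\Lambda_{\mathbf{M}}\mathbf{T}^{-1}$, obtaining $\|\Lambda_{\mathbf{M}}\|_2=\sqrt{\bar\beta}=1-\sqrt{1-2\mu s+\mu}$ together with instance-\emph{uniform} bounds on $\|\mathbf{T}\|_2\|\mathbf{T}^{-1}\|_2$. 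Crucially this diagonalization requires $(\bm{W}_{:,S})^T\bm{A}_{:,S}$ to be real symmetric with spectrum in $[1-\mu(s-1),\,1+\mu(s-1)]$ — the symmetric parameterization $\bm{W}=(\bm{G}^T\bm{G})\bm{A}$ of Section 3.1 is a load-bearing ingredient of the theorem, and your proposal never invokes it.

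Two secondary points. First, because $\mathbf{M}$ is non-normal, $\|\bm{x}^{(k)}_S-\overline{\bm{x}}_S\|_2$ is not monotone in the momentum phase, so "every true-support entry exceeds the threshold" must itself be re-established by induction at every step using the uniform transient bound $\|\mathbf{T}\|_2\|\mathbf{T}^{-1}\|_2$; it is not a one-time event as your last paragraph suggests. Second, the bound (\ref{eq:ck_bound}) "valid from the start with a conservative $\beta^{(k)}$" is realized in the paper by taking $\beta^{(k)}=0$ for $k\leq\widetilde{K}_0$ (plain ALISTA); any positive $\beta$ in that phase, analyzed through your normed inequality, would actually worsen the provable rate.
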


Despite remaining linearly convergent, our model has an initial rate of $2\mu s- \mu $ that is no worse than the rates in state-of-the-arts \cite{chen2018theoretical,liu2019alista,aberdam2020ada}
and an eventual rate $1 - \sqrt{1-2\mu s + \mu}$ that is strictly better than the rate in \cite{chen2018theoretical} because $2\mu s - \mu > 1 - \sqrt{1-2\mu s + \mu}$ as $2\mu s - \mu < 1$. For example, if $2\mu s - \mu = 0.9$, then our new rate is $1 - \sqrt{1-2\mu s + \mu} \approx 0.684$. Moreover, the eventual error ${2sC_W\sigma}/{(1-2\mu s + \mu)}$ is no worse than that in \cite{chen2018theoretical}. Proof details can be found in the supplement.

\subsection{Instance-Adaptive Parameters: From Linear to Superlinear Convergence}

State-of-the-art theoretical results show that LISTA converges with a linear rate \cite{chen2018theoretical,liu2019alista,aberdam2020ada,Wu2020Sparse,yang2020learning}. In ALISTA \cite{liu2019alista}, the authors show that linear convergence is a lower bound. This means that one cannot expect a superlinear rate if a uniform set of optimal parameters are learned for a dataset. 

One promise to enhance the rate in Theorem \ref{prop:alista-momentum} is to further introduce instance adaptivity. 
We establish that, if $\{\theta^{(k)}, p^{(k)}, \gamma^{(k)}, \beta^{(k)}\}_{k=1}^{\infty}$ can be searched for an instance, such instance-optimal parameters lead to superlinear convergence.
\begin{theorem}
\label{prop:lista-momentum-superlinear}
Let $\bm{x}^{(0)}=\bm{0}$ and $\{\bm{x}^{(k)}\}_{k=1}^{\infty}$ be generated by (\ref{eq:alista-momentum}). If Assumption \ref{assume:basic} holds and $s < (1+1/\mu)/2$ and $\sigma$ is small enough, then there exists a sequence of parameters $\{\theta^{(k)}, p^{(k)}, \gamma^{(k)}, \beta^{(k)}\}_{k=1}^{\infty}$ for each instance $(\bm{x}^*,\varepsilon)\in \X(B,\underline{B},s,\sigma)$ so that we have:
\begin{equation}
\label{eq:superlinear_conv}
  \|\bm{x}^{(k)}-\bm{x}^\ast\|_2 \leq \sqrt{s} B \prod_{t=0}^{k}\Bar{c}^{(t)}{+\frac{2sC_W}{1-2\mu s + \mu}\sigma},\quad \forall k = 1,2,\cdots,
\end{equation}
where {$C_W$ is defined in Theorem \ref{prop:alista-momentum} and }the convergence rate $\Bar{c}^{(k)}$ satisfies
\begin{equation}
    \label{eq:superlinear-c}
    \Bar{c}^{(k)}\to0 ~~\mathrm{as}~~ k \to \infty.
\end{equation}
To achieve this super-linear convergence rate, parameters are chosen as follows:
\begin{equation}
    \label{eq:superlista-theta}
    \theta^{(k)} = \gamma^{(k)} \Big(\mu \|\bm{x}^{(k)}-\bm{x}^\ast\|_1 + C_W\sigma \Big), \quad \forall k
\end{equation}
and $p^{(k)}$ follows (\ref{eq:para_seq}) for all $k$. The other two parameters $\gamma^{(k)},\beta^{(k)}$ follow (\ref{eq:para_seq}) at the initial stage (with small $k$) and they are instance-adaptive as $k$ large enough. Consequently,  
the model (\ref{eq:alista-momentum}) reduces to the conjugate gradient (CG) iteration on the following linear system as $k$ large enough:
    \begin{equation}
        \label{eq:cg-system}
        \bm{W}^T_S\big(\bm{A}_S\bm{x}_S - \bm{b}\big) = \bm{0},
    \end{equation}
    where $S$ denotes the support of $\bm{x}^*$ and $\bm{A}_S, \bm{W}_S$ denotes, respectively, the sub-matrices of $\bm{A},\bm{W}$ with column mask $S$.
\end{theorem}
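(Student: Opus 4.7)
The plan is to exhibit three successive regimes in which the iteration (\ref{eq:alista-momentum}) first contracts linearly, then locks onto the support of $\bm{x}^*$, and finally runs conjugate gradient on the reduced system (\ref{eq:cg-system}). In the warm-up regime I would run (\ref{eq:alista-momentum}) with the uniform schedule from Theorem \ref{prop:alista-momentum}, so that $\|\bm{x}^{(k)}-\bm{x}^*\|_1\leq C_0(2\mu s-\mu)^k+\mathcal{O}(\sigma)$, and then choose an index $K_0$ at which both $p^{(K_0)}=n$ and $\mu\|\bm{x}^{(K_0)}-\bm{x}^*\|_1+C_W\sigma<\underline{B}/2$ hold simultaneously; this is possible because Assumption \ref{assume:basic} forces $\underline{B}>0$ and $\sigma$ is taken small enough.

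For $k\geq K_0$ the operator $\eta_{\theta^{(k)}}^{p^{(k)}}$ is pure hard thresholding. Writing $v^{(k)}$ for the argument of the thresholding and using $\bm{b}=\bm{A}\bm{x}^*+\varepsilon$ together with $(\bm{W}^T\bm{A})_{ii}=1$ from the symmetric Jacobian construction of Section \ref{sec:symmetric}, a one-line coherence expansion gives, coordinate-wise, $|v^{(k)}_i-\bm{x}^*_i|\leq\gamma^{(k)}\bigl(\mu\|\bm{x}^{(k)}-\bm{x}^*\|_1+C_W\sigma\bigr)+|\beta^{(k)}||\bm{x}^{(k)}_i-\bm{x}^{(k-1)}_i|$, where the last summand vanishes off-support as soon as the support has locked. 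I would then induct on $k\geq K_0$: for $i\notin S$ the bound yields $|v^{(k)}_i|\leq\theta^{(k)}$ so hard thresholding kills the entry, whereas for $i\in S$ the first summand lies below $\underline{B}/2$ and hence $|v^{(k)}_i|\geq\underline{B}-\theta^{(k)}>\theta^{(k)}$, so the entry is retained unchanged. Consequently $\mathrm{supp}(\bm{x}^{(k)})=S$ for every $k\geq K_0$ and the thresholding acts as the identity on $S$ and as zero on $S^c$.

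With the support frozen, (\ref{eq:alista-momentum}) restricted to $S$ becomes the linear heavy-ball recursion $\bm{x}^{(k+1)}_S=\bm{x}^{(k)}_S+\gamma^{(k)}\bm{W}_S^T(\bm{b}-\bm{A}_S\bm{x}^{(k)}_S)+\beta^{(k)}(\bm{x}^{(k)}_S-\bm{x}^{(k-1)}_S)$, whose system matrix $\bm{W}_S^T\bm{A}_S=\bm{D}_S^T\bm{D}_S$ is symmetric positive definite by (\ref{eq:w-new}). To finish, I would invoke the classical equivalence between a three-term heavy-ball recursion on an SPD system and conjugate gradient: any CG iteration $\bm{x}^{(k+1)}=\bm{x}^{(k)}+\alpha_k d^{(k)}$, $d^{(k)}=r^{(k)}+\beta^{\mathrm{cg}}_{k-1} d^{(k-1)}$ on (\ref{eq:cg-system}) can be rewritten in exactly that form by setting $\gamma^{(k)}=\alpha_k$ and $\beta^{(k)}=\alpha_k\beta^{\mathrm{cg}}_{k-1}/\alpha_{k-1}$. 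Finite termination of CG in at most $|S|\leq s$ steps then yields $\bar c^{(k)}\to 0$ and the residual bound (\ref{eq:superlinear_conv}). The main obstacle is the induction in the middle paragraph: the instance-adaptive choice (\ref{eq:superlista-theta}) must be simultaneously large enough to zero out all $n-s$ off-support coordinates and small enough to spare all $s$ on-support coordinates, which requires propagating $\mu\|\bm{x}^{(k)}-\bm{x}^*\|_1+C_W\sigma<\underline{B}/2$ forward through every subsequent iteration while the schedule transitions from the uniform parameters of (\ref{eq:para_seq}) to the instance-adaptive ones; everything after support locking is a direct application of standard Krylov theory to the reduced linear system.
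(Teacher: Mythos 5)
Your three-regime architecture (linear warm-up with the uniform schedule, support locking under hard thresholding, heavy-ball-as-CG on the reduced SPD system with finite termination) is exactly the paper's proof strategy, and your reparameterization $\gamma^{(k)}=\alpha_k$, $\beta^{(k)}=\alpha_k\beta^{\mathrm{cg}}_{k-1}/\alpha_{k-1}$ matches the paper's choice in its CG-parameter block. However, the step you flag at the end as ``the main obstacle'' --- propagating $\mu\|\bm{x}^{(k)}-\bm{x}^*\|_1+C_W\sigma<\underline{B}/2$ forward through the CG phase so that the thresholding really does act as the identity on $S$ and as zero on $S^c$ at \emph{every} subsequent iteration --- is a genuine gap, not a routine detail, because CG is not monotone in the Euclidean norm: after the switch the iterates can temporarily move \emph{away} from $\overline{\bm{x}}_S$ in $\ell_2$, so the induction hypothesis you need does not follow from the bound at the switching time alone. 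Without closing this, you cannot assert that the recursion restricted to $S$ stays linear (the thresholding could reactivate off-support entries or kill on-support ones), and then the Krylov argument does not apply.

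The paper closes this gap as follows: CG on the SPD system $(\bm{W}_{:,S})^T\bm{A}_{:,S}=(\bm{D}^T\bm{D})_{S,S}$ is monotonically nonincreasing in the energy norm $\|\bm{D}_{:,S}(\cdot)\|_2$, and by the Gershgorin bound the eigenvalues of $(\bm{D}^T\bm{D})_{S,S}$ lie in $[1-\mu(s-1),\,1+\mu(s-1)]$ with $1-\mu(s-1)>0$ under $s<(1+1/\mu)/2$; converting energy-norm monotonicity back to the Euclidean norm costs only the uniform condition-number factor $\tfrac{1+\mu s-\mu}{1-\mu s+\mu}$. The paper therefore inflates the warm-up constant to $C_1=\max\bigl(C_0,\tfrac{1+\mu s-\mu}{1-\mu s+\mu}\sqrt{s}B\bigr)$ and delays the switch index $\widetilde{K}_1$ accordingly, so that even after this worst-case blow-up every CG iterate satisfies $\|\bm{x}^{(k)}_S-\overline{\bm{x}}_S\|_2<\underline{B}/2$, which keeps $\bm{v}^{(k)}=\bm{0}$ (no thresholding activity) for all $k>\widetilde{K}_1$. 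A second, smaller omission in your write-up: the CG phase converges to $\overline{\bm{x}}_S=\bm{x}^*_S+\bigl((\bm{W}_{:,S})^T\bm{A}_{:,S}\bigr)^{-1}(\bm{W}_{:,S})^T\varepsilon$ rather than to $\bm{x}^*_S$, so the final bound (\ref{eq:superlinear_conv}) requires the separate estimate $\|\delta\bm{x}_S\|_2\leq\tfrac{sC_W}{1+\mu-\mu s}\sigma$ to account for the noise-induced offset; your residual term should be traced to that quantity rather than asserted.
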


\subsection{HyperLISTA: Reducing ALISTA to Tuning Three Hyperparameters}

\textbf{Adaptive Parameters}  
Based on Theorem \ref{prop:lista-momentum-superlinear}, we design the following instance-adaptive parameter formula  as the recovery signal $\bm{x}^{(k)}$ is not accurate enough (equivalently, as $k$ is small):
\begin{align}
    \gamma^{(k)} =& ~~1, \label{eq:gamma_formula}\\
    \theta^{(k)} =& ~~c_1 \mu \gamma^{(k)} \big\|\bm{A}^{+}(\bm{A}\bm{x}^{(k)}-\bm{b}) \big\|_1, \label{eq:theta_formula}\\
    \beta^{(k)} = & ~~c_2 \mu~ \|\bm{x}^{(k)}\|_0, \label{eq:beta_formula}\\
    p^{(k)} =& ~~ c_3 \min\Bigg( \log\bigg( \frac{\|\bm{A}^{+}\bm{b}\|_1}{\|\bm{A}^{+}(\bm{A}\bm{x}^{(k)}-\bm{b})\|_1} \bigg), n \Bigg),
    \label{eq:p_formula}
\end{align}
where $c_1>0,c_2>0,c_3>0$ are hyper-parameters to tune.

The step size formula (\ref{eq:gamma_formula}) stems from the conclusion (\ref{eq:para_seq}) as Theorem \ref{prop:lista-momentum-superlinear} suggests. 
The threshold $\theta^{(k)}$ in (\ref{eq:superlista-theta}) cannot be directly used due to its dependency on the ground truth $\bm{x}^*$. 
In (\ref{eq:theta_formula}), we use $\|\bm{A}^{+}(\bm{A}\bm{x}^{(k)}-\bm{b})\|_1$ to estimate (\ref{eq:superlista-theta}) because
\[\big\|\bm{A}^{+}(\bm{A}\bm{x}^{(k)}-\bm{b})\big\|_1 = \big\|\bm{A}^{+}\bm{A}(\bm{x}^{(k)}-\bm{x}^\ast) - \bm{A}^{+} \varepsilon\big\|_1  \approx \mathcal{O}\big(\|\bm{x}^{(k)}-\bm{x}^\ast\|_1\big) + \mathcal{O}\big(\sigma\big).\] 
Similar thresholding schemes are studied in \cite{ziang2021learned}.

The momentum rule (\ref{eq:beta_formula}) is obtained via equation (\ref{eq:para_seq}).
The momentum factor $\beta^{(k)}$ is asymptotically equal to $\big(1 - \sqrt{1-2\mu s + \mu}\big)^2$, which is a monotonic increasing function w.r.t. $s$, the number of nonzeros in the ground truth sparse vector $\bm{x}^*$. Therefore, $\ell_0$ norm of $\bm{x}^{(k)}$ is a natural choice to approximate $\beta^{(k)}$, considering that $\bm{x}^{(k)}$ will gradually converge to $\bm{x}^*$ and hence $\|\bm{x}^{(k)}\|_0$ to $\|\bm{x}^*\|_0$.

\begin{figure}
  \centering
    \begin{tikzpicture}

\path[fill=gray] plot[domain=0:2.5] (\x, {3 * exp(-\x) + 0.75})  -- (2.5,0) -- plot[domain=1.4:0] (\x, {3 * exp(-\x) - 0.75});

 

\draw[->] (0,0) -- (5.5,0);
\draw[->] (0,0) -- (0,4);
\draw[thick] (0,3) -- (0,0);
\filldraw[black] (0,3) circle (2pt) node[anchor=west] {};
\draw[thick] (0.5,1.8196) -- (0.5,0);
\filldraw[black] (0.5,1.8196) circle (2pt) node[anchor=west] {};
\draw[thick] (1,1.1036) -- (1,0);
\filldraw[black] (1,1.1036) circle (2pt) node[anchor=west] {};
\draw[thick] (1.5,0.6694) -- (1.5,0);
\filldraw[black] (1.5,0.6694) circle (2pt) node[anchor=west] {};
\draw[thick] (2,0.4060) -- (2,0);
\filldraw[black] (2,0.4060) circle (2pt) node[anchor=west] {};
\draw[thick] (2.5,0.2463) -- (2.5,0);
\filldraw[black] (2.5,0.2463) circle (2pt) node[anchor=west] {};
\filldraw[black] (3,0) circle (2pt) node[anchor=west] {};
\filldraw[black] (3.5,0) circle (2pt) node[anchor=west] {};
\filldraw[black] (4,0) circle (2pt) node[anchor=west] {};
\filldraw[black] (4.5,0) circle (2pt) node[anchor=west] {};
\filldraw[black] (5,0) circle (2pt) node[anchor=west] {};


\draw (-0.5,3.75) -- (0, 3.75);
\draw (-0.5,2.25) -- (0, 2.25);
\draw[->] (-0.25, 3.25) -- (-0.25, 3.75);
\draw[->] (-0.25, 2.75) -- (-0.25, 2.25);
\draw (-0.5, 3) node {$2 e^{(k)}$};

\draw (0,0) -- (0, -0.5);
\draw (1.4,0) -- (1.4, -0.5);
\draw[<->] (0, -0.25) -- (1.4,-0.25);
\draw (0.7, -0.7) node {Trust Region $p^{(k)}$};

\draw[dashed] (2,3.75) -- (5,3.75) -- (5, 2) -- (2, 2) -- (2, 3.75);
\draw[thick] (2.5, 3.5) -- (2.5, 3);
\filldraw[black] (2.5,3.5) circle (2pt) node[anchor=west] {};
\draw (3.75, 3.25) node {Sorted $|x^\ast_i|$};
\path[fill=gray] (2.25,2.75) -- (2.75, 2.75) -- (2.75, 2.25) -- (2.25, 2.25);
\draw (3.75, 2.675) node {Possible Area };
\draw (3.75, 2.325) node {of $|x^{(k)}_i|$};

\draw[dashed]   plot[domain=0:3.5] (\x, {3 * exp(-\x) + 0.75});
\draw[dashed]   plot[domain=0:1.4] (\x, {3 * exp(-\x) - 0.75});

\end{tikzpicture}
    \caption{Choice of support selection}
    \label{fig:diag-ss}
\end{figure}
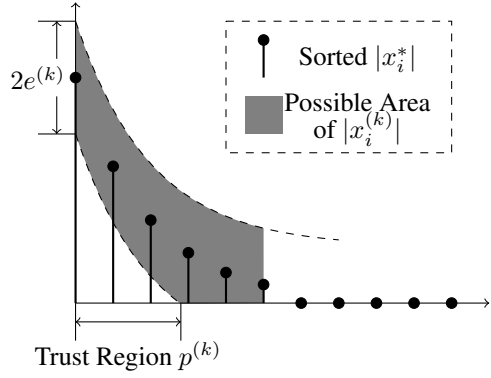

The support selection rule (\ref{eq:p_formula}) can be explained as follows.
First we sort the magnitude of $|\bm{x}^*_i|$ as in Figure \ref{fig:diag-ss}. At the $k$-th layer, given $e^{(k)} = \|\bm{x}^{(k)}-\bm{x}^*\|_{\infty}$, each element in $\bm{x}^{(k)}$ can be bounded as 
\[|{x}^*_i| - e^{(k)} \leq |{x}^{(k)}_i| \leq |{x}^*_i| + e^{(k)},\quad \forall i.\]
Such area is shown as the gray area in Figure \ref{fig:diag-ss}. As we discussed in Section \ref{sec:pre} following definition (\ref{eq:ss-operator}), the support selection is a scheme to select those elements that can be trusted as true support. The ``trust region" in Figure \ref{fig:diag-ss} illustrates such entries. 
Intuitively, the trust region should be larger as the error $e^{(k)}$ gets smaller.  
We propose to use the logarithmic function of the error to estimate $p^{(k)}$.
\begin{equation}
    \label{eq:p-normal}  p^{(k)} \approx c_3 \log\bigg( \frac{\|\bm{x}^*\|_\infty}{e^{(k)}} \bigg) \approx c_3 \log\bigg( \frac{\|\bm{A}^{+}\bm{b}\|_1}{\|\bm{A}^{+}(\bm{A}\bm{x}^{(k)}-\bm{b})\|_1} \bigg)
\end{equation}
The hyperparameter $0<c_3<1$ is to tune based on the distribution of $\bm{x}^*$.
With an upperbound $n$ (the size of the vector $\bm{x}^*$), we obtain (\ref{eq:p_formula}).
Actually (\ref{eq:p-normal}) can be mathematically derived by assuming the magnitude of the nonzero entries of $\bm{x}^*$ are normally distributed. (The derivation can be found in the supplement.) 
As $k\to \infty$, it holds that $e^{(k)}\to0$ and $p^{(k)}\to n$, which means that the operator $\eta^{p^{(k)}}_{\theta^{(k)}}$ is getting close to a hard-thresholding operator. 
The support selection scheme proposed in LISTA-CPSS \cite{chen2018theoretical} also follows such principle but it requires more prior knowledge. Compared to that, formula (\ref{eq:p_formula}) is much more automatic and self-adaptive.

\textbf{Switching to Conjugate Gradients} Theorem \ref{prop:lista-momentum-superlinear} suggests that one may call conjugate gradient (CG) to obtain faster convergence as the recovery signal $\bm{x}^{(k)}$ is accurate enough. 
In practice, we choose to switch to CG as $k$ is large enough such that $p^{(k)}$ is large enough.
The linear system (\ref{eq:cg-system}) to be solved by CG depends on the support of $\bm{x}^*$ and we estimate the support by the support of $\bm{x}^{(k)}$ since we assume $\bm{x}^{(k)}$ is accurate enough when we call CG. 
Finally, HyperLISTA is described in Algorithm \ref{algo:hyperlista}.

\begin{algorithm2e}[ht]
\SetKwInOut{initial}{Initialize}
\SetKw{Break}{break}
\KwIn{Observation $\bm{b}$, dictionary $\bm{A}$}, hyperparameters $c_1,c_2,c_3$.
\initial{Let $\bm{x}^{(0)} = \bm{0}$.}
Calculate $\bm{D}$ and $\bm{G}$ with (\ref{eq:w-new}), set $\bm{W} = (\bm{G}^T\bm{G})\bm{A}$.\\
Calculate $\mu$ with $\mu = \max_{i\neq j}|(\bm{D}^T\bm{D})_{i,j}|$.\\
\For{$j = 0,1,2,\ldots$ until convergence} {
Conduct iteration (\ref{eq:alista-momentum}) with parameters defined in (\ref{eq:theta_formula} - \ref{eq:p_formula}).\\
\If{$p^{(k)}$ is large enough} {
\Break 
}
}
Set $S = \mathrm{supp}(\bm{x}^{(k)})$, call the conjugate gradient algorithm to solve linear system (\ref{eq:cg-system}).\\
\KwOut{$\hat{\bm{x}}$, the result of the conjugate gradient.}
\caption{HyperLISTA with tuned $c_1,c_2,c_3$}\label{algo:hyperlista}
\end{algorithm2e}

\subsection{Hyperparameter-Tuning Options for HyperLISTA}

HyperLISTA has reduced the complexity of the learnable parameters to nearly the extreme: only three. While backpropagation remains to be a viable option to solve them, it becomes an over-kill to solve just three variables by passing gradients through tens of neural network layers. Besides, the hyperparameter $c_3$ in (\ref{eq:beta_formula}) decides the ratio of elements that will be selected into the support and hence passes the thresholding function; that makes $c_3$ non-differentiable in the computation graph.

Instead, we are allowed to seek simpler, even gradient-free methods to search for the parameters. We also empirically find HyperLISTA has certain robustness to perturbing the found values of $c_1$, $c_2$, $c_3$, which also encourages us to go with less precise yet much cheaper search methods.

In this work, we try replacing the backpropagation-based training with the vanilla grid search for learning HyperLISTA. Besides the ultra-light parameterization, another enabling factor is the low evaluation cost on fitness of hyperparameters: just running inference on one minibatch of training samples with them, and no gradient-based training needed. From another aspect, HyperLISTA can be viewed as an iterative algorithm instead of an unfolded neural network. It can be optimized directly on the unseen test data, as long as one wants to afford the cost of re-searching hyperparameters on each dataset, whose cost is still much lower than training using back-propagation.

Specifically, we  we first use a coarse grid to find an ``interested region'' of hyperparameters, and then zoom-in with a fine-grained grid. Details are to be found in Section 4.2, and we plan to try other options such as bayesian optimization \cite{feurer2019hyperparameter} in future work.

\section{Numerical experiments}

\textbf{Experiment settings} We use synthesized datasets of 51,200 samples for training, 2,048 validation and 2,048 testing. We follow previous works \cite{chen2018theoretical,liu2019alista} to use a problem size of $(m, n)=(250,500)$. The elements in the dictionary $A$ are sampled from i.i.d. standard Gaussian distribution and we then normalize $A$ to have unit $\ell_2$ column norms.
The sparse vectors $x^*$ are sampled from $N(0,\sigma^2)\cdot Bern(p)$, where $N(\mu,\sigma^2)$ represents the normal distribution and $Bern(p)$ the Bernoulli distribution with probability $p$ to take value 1. By default, we choose $\sigma=1$ and $p=0.1$, meaning that around 10\% of the elements are non-zero whose magnitudes further follows a standard Gaussian distribution.
The additive noise $\varepsilon$ in (\ref{eq:linear_model}) follows Gaussian distribution $N(0, \sigma_\varepsilon^2)$. The noise level is measured by signal-to-noise ratio (SNR) in decibel unit.
Note that we can change the values of $\sigma$, $p$ and $\sigma_\varepsilon$ during testing to evaluate the adaptivity of models.
In all experiments, the model performance is measured by normalized mean squared error (NMSE, same as defined in \cite{chen2018theoretical,liu2019alista}) in decibel unit. All models are unrolled and truncated to 16 layers for training and testing following the typical setting in \cite{chen2018theoretical,liu2019alista,Wu2020Sparse}, unless otherwise specified.
Besides synthesized data, we also evaluate HyperLISTA on a compressive sensing task using natural images. The results are presented in Appendix~\ref{sec:cs_exp} due to limited space.

\textbf{Comparison Methods:}
In this section, we compare with the original \textit{LISTA} \cite{gregor2010learning} formulated in (\ref{eq:vanilla-lista}), \textit{ALISTA} formulated in (\ref{eq:alista}) and its variant with momentum acceleration denoted as \textit{ALISTA-MM} (\ref{eq:alista-momentum}).
A suffix \textit{-Symm} will be added if a model adopts the symmetric parameterization introduced in Section~\ref{sec:symmetric}.
We use \textit{HyperLISTA(-Full)} to represent our proposed method in which we optimize all three hyperparameters $c_1, c_2, c_3$ in (\ref{eq:theta_formula})~-~(\ref{eq:p_formula}) using grid search.
In contrast, we can use back-propagation to train $c_1$ and $c_2$ and leave $p^{(k)}$ manually selected as in ALISTA because the loss function is not differentiable with respect to $p^{(k)}$. The resulting model is denoted as \textit{HyperLISTA-BP}.
Other baselines include \textit{Ada-LISTA} \cite{aberdam2020ada} and \textit{NA-ALISTA} \cite{behrens2021neurally}.

\subsection{Validation of symmetric matrix parameterization and Theorem 1}
\label{sec:sym-valication}

We first validate the efficacy of the symmetric matrix parameterization and Theorem 1, showing the acceleration brought by the momentum term. For this validation, we compare four models: ALISTA, ALISTA-Symm, ALISTA-MM and ALISTA-MM-Symm. We train them on the same dataset and compare their performance measured by NMSE on the testing set. Here, the support selection ratios are selected by following the recommendation in the official implementation of \cite{liu2019alista}. Results are shown in Figure~\ref{fig:sec41}\footnote{Here ALISTA achieves a worse NMSE than that in \cite{liu2019alista}. We attribute this to the finite-size training set that we use here instead of unlimited training set in \cite{liu2019alista}.}. As we can see from Figure~\ref{fig:sec41}, the symmetric matrix parameterization results in almost no performance drop, corroborating the validity of the new parameterization. On the other hand, introducing momentum to both parameterization brings significant improvement in NMSE.

\begin{figure}
    \begin{minipage}{0.48\textwidth}
      \centering
      \includegraphics[width=0.90\textwidth]{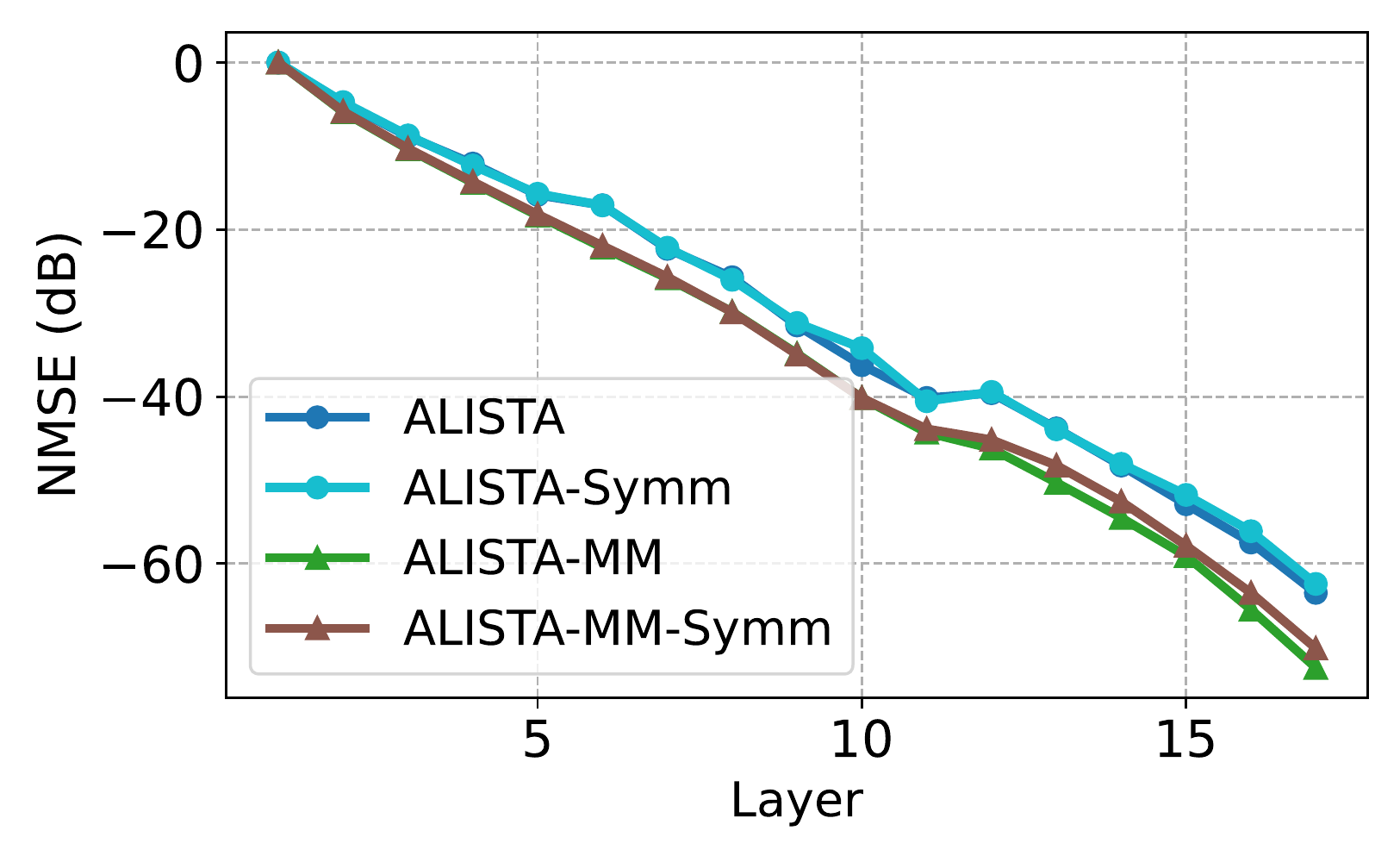}
      \caption{Effect of symmetric matrix parameterization and momentum. Momentum provides acceleration for either parameterization.}
      \label{fig:sec41}
    \end{minipage}%
    \hfill
    \begin{minipage}{0.48\textwidth}
      \centering
      \includegraphics[width=0.90\textwidth]{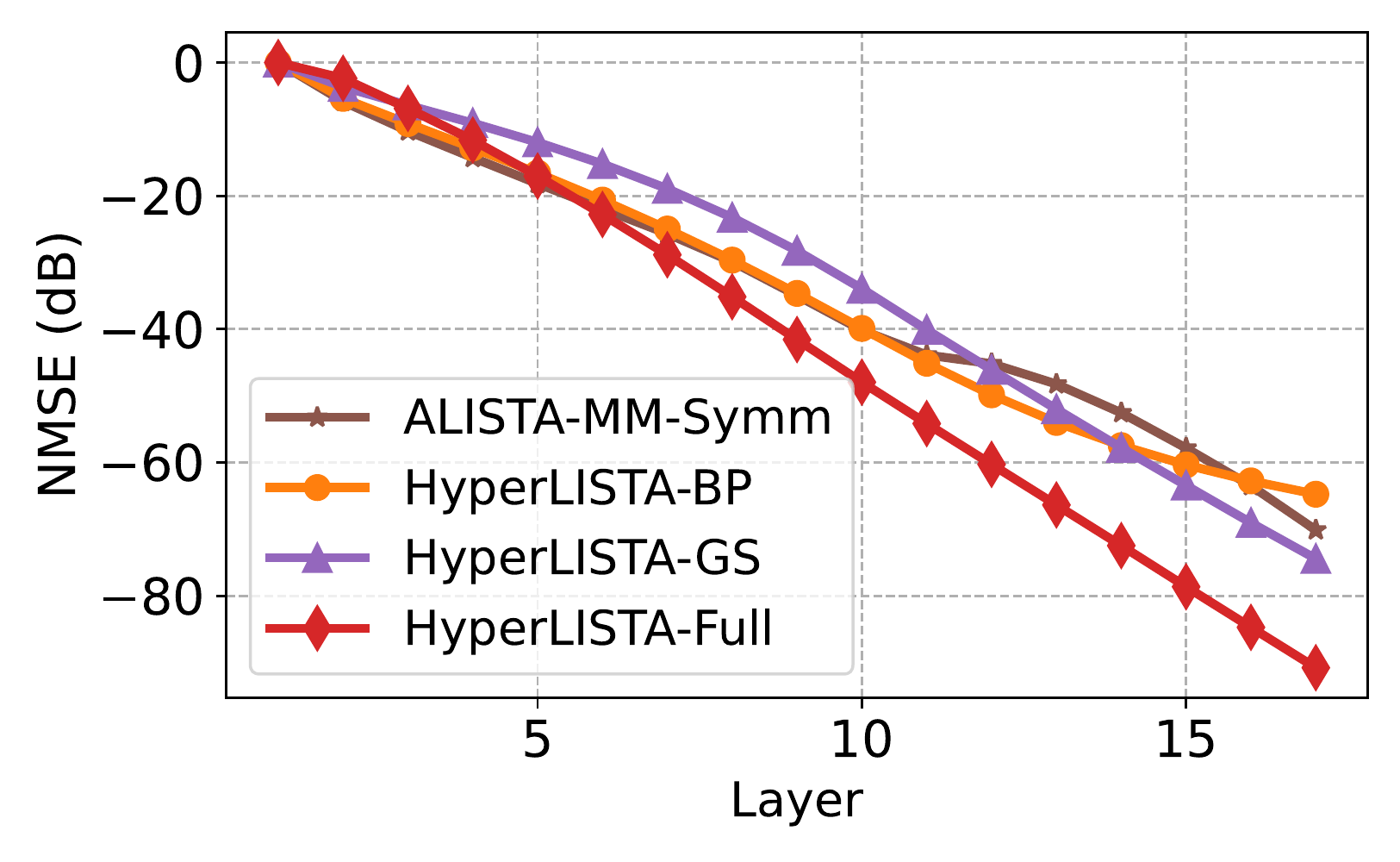}
      \caption{Back-propagation and grid search based training, compared with HyperLISTA-Full with $c_1,c_2,c_3$ searched.}
      \label{fig:sec42}
    \end{minipage}
\end{figure}

\begin{figure}
\begin{subfigure}{0.48\textwidth}
\centering
     \includegraphics[width=0.9\textwidth]{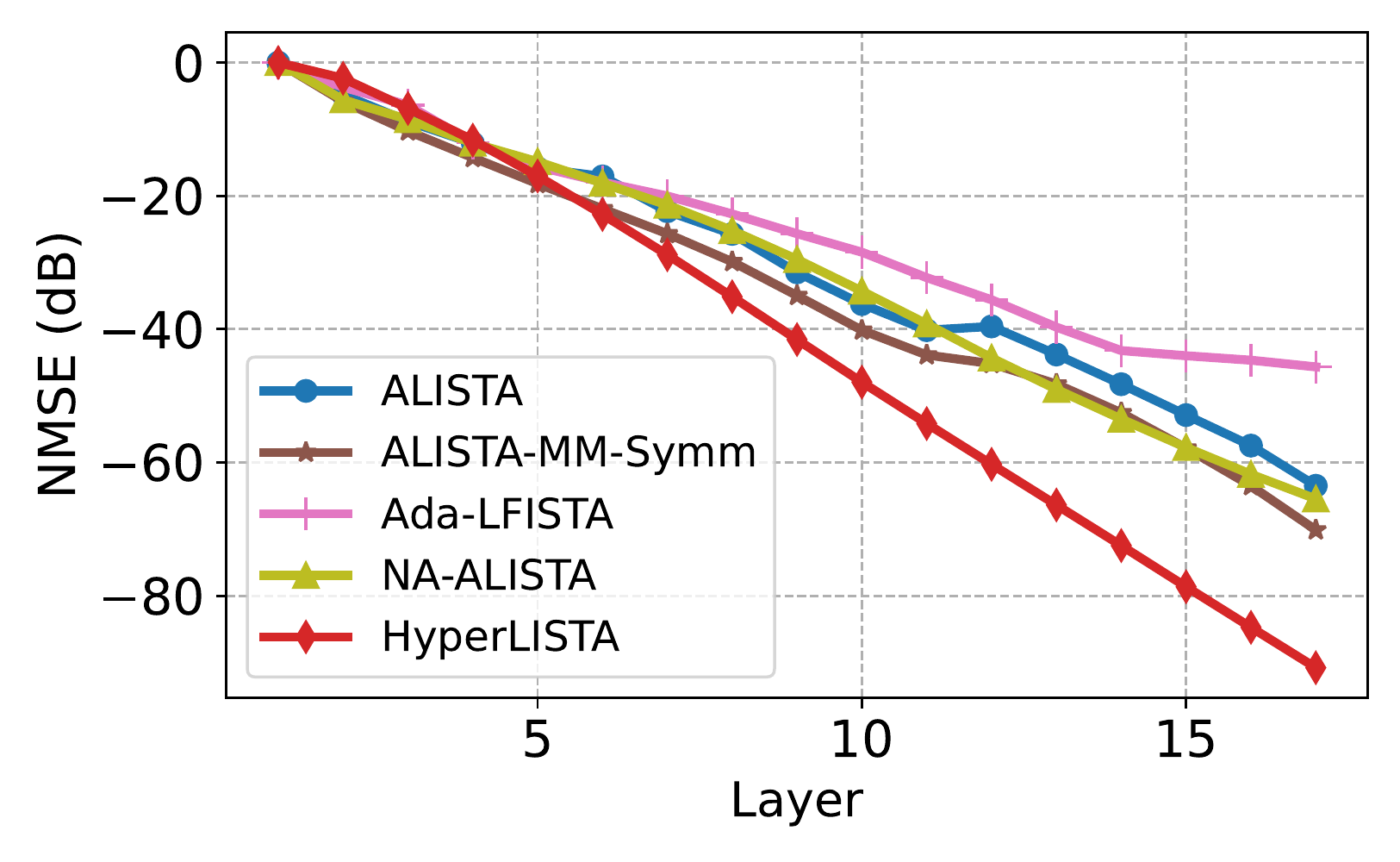}
     \caption{Noiseless. No train/test mismatch.}
     \label{fig:exp-noiseless}
\end{subfigure}%
 \hfill
\begin{subfigure}{0.48\textwidth}
\centering
     \includegraphics[width=0.9\textwidth]{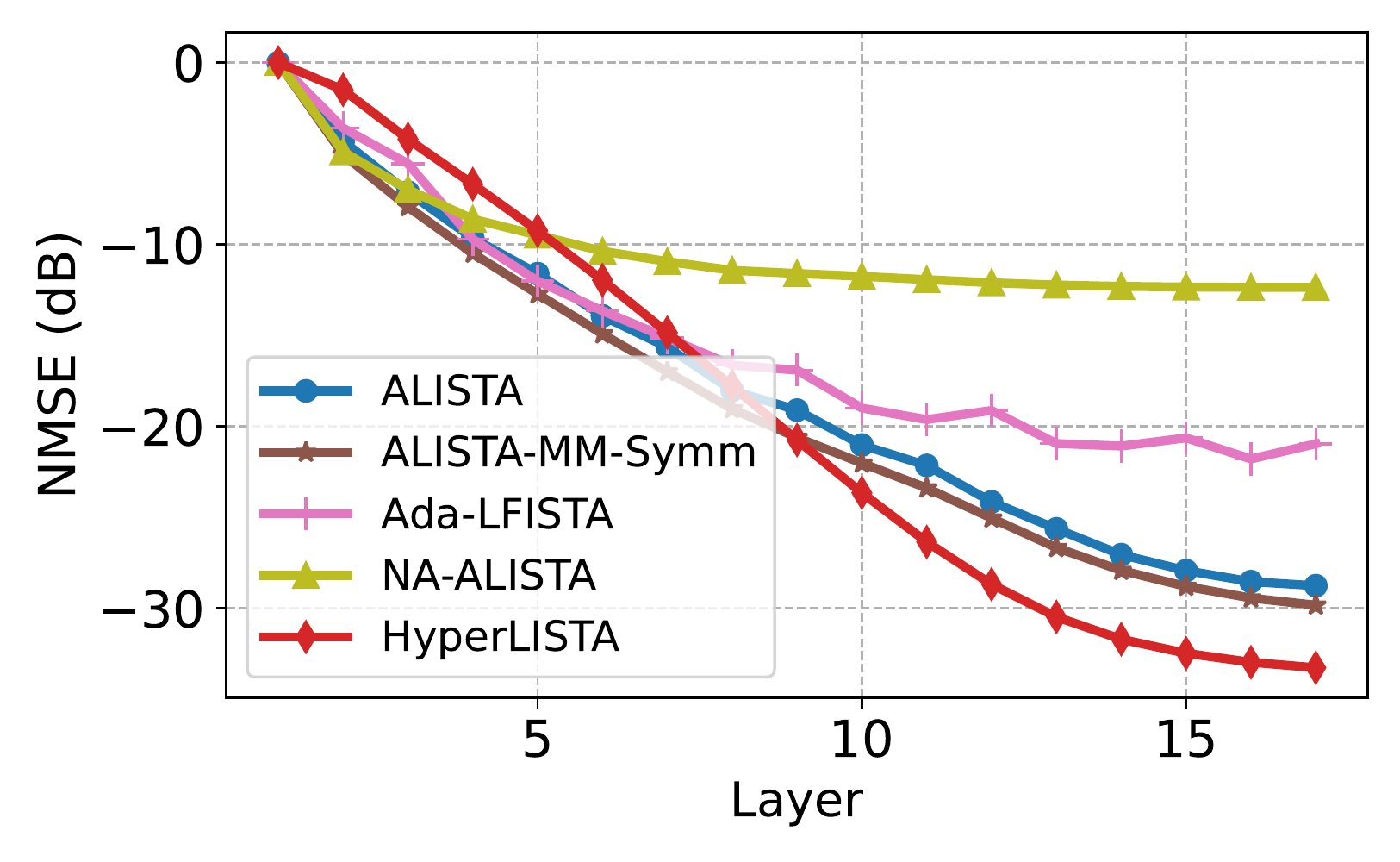}
     \caption{Sparsity ratio $p$ changed to 0.15.}
     \label{fig:exp-adapt-sparsity}
\end{subfigure} \\
\begin{subfigure}{0.48\textwidth}
\centering
     \includegraphics[width=0.9\textwidth]{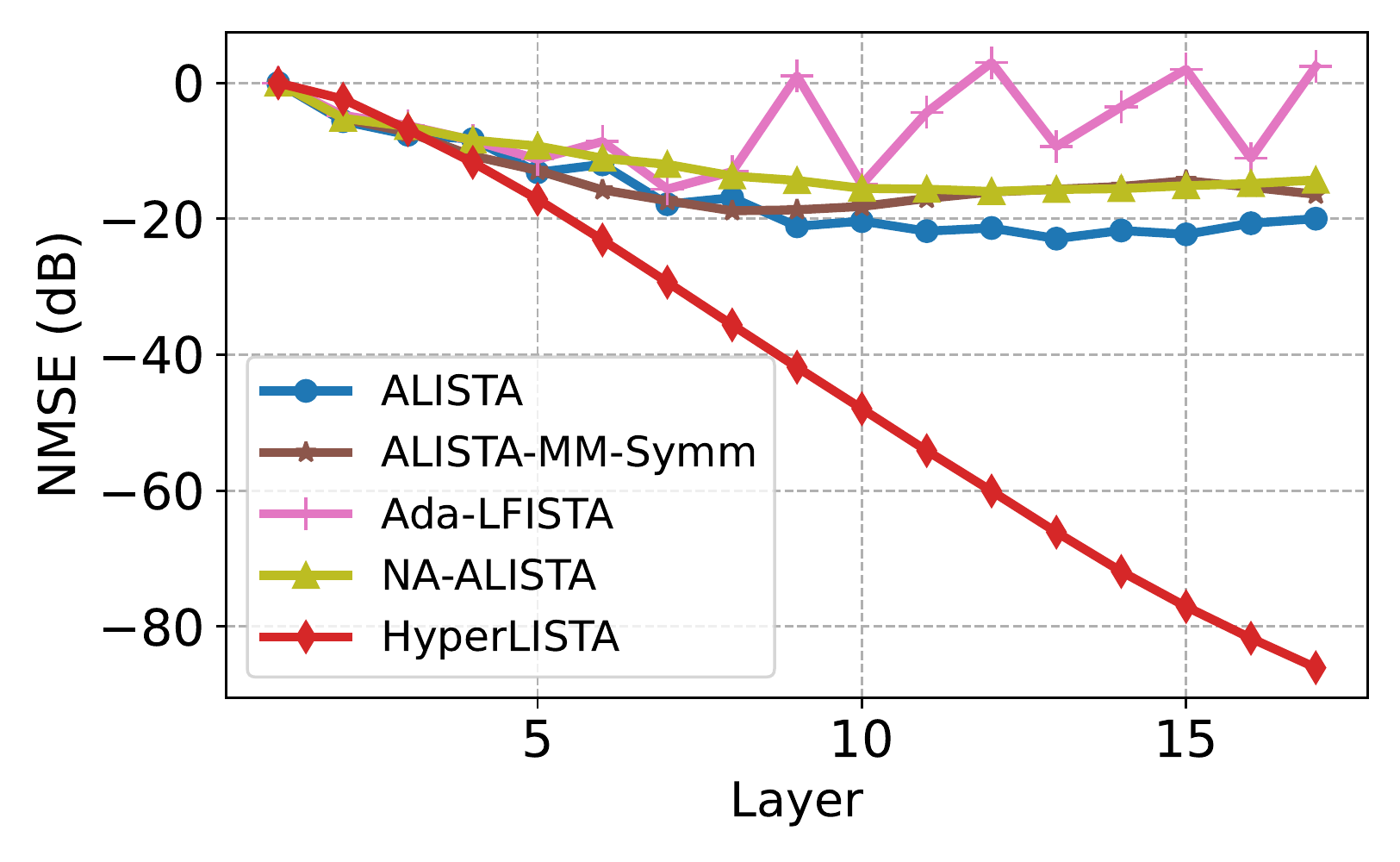}
     \caption{Variance $\sigma$ of non-zero elements changed to 2.}
     \label{fig:exp-adapt-mag}
\end{subfigure}%
 \hfill
\begin{subfigure}{0.48\textwidth}
\centering
     \includegraphics[width=0.9\textwidth]{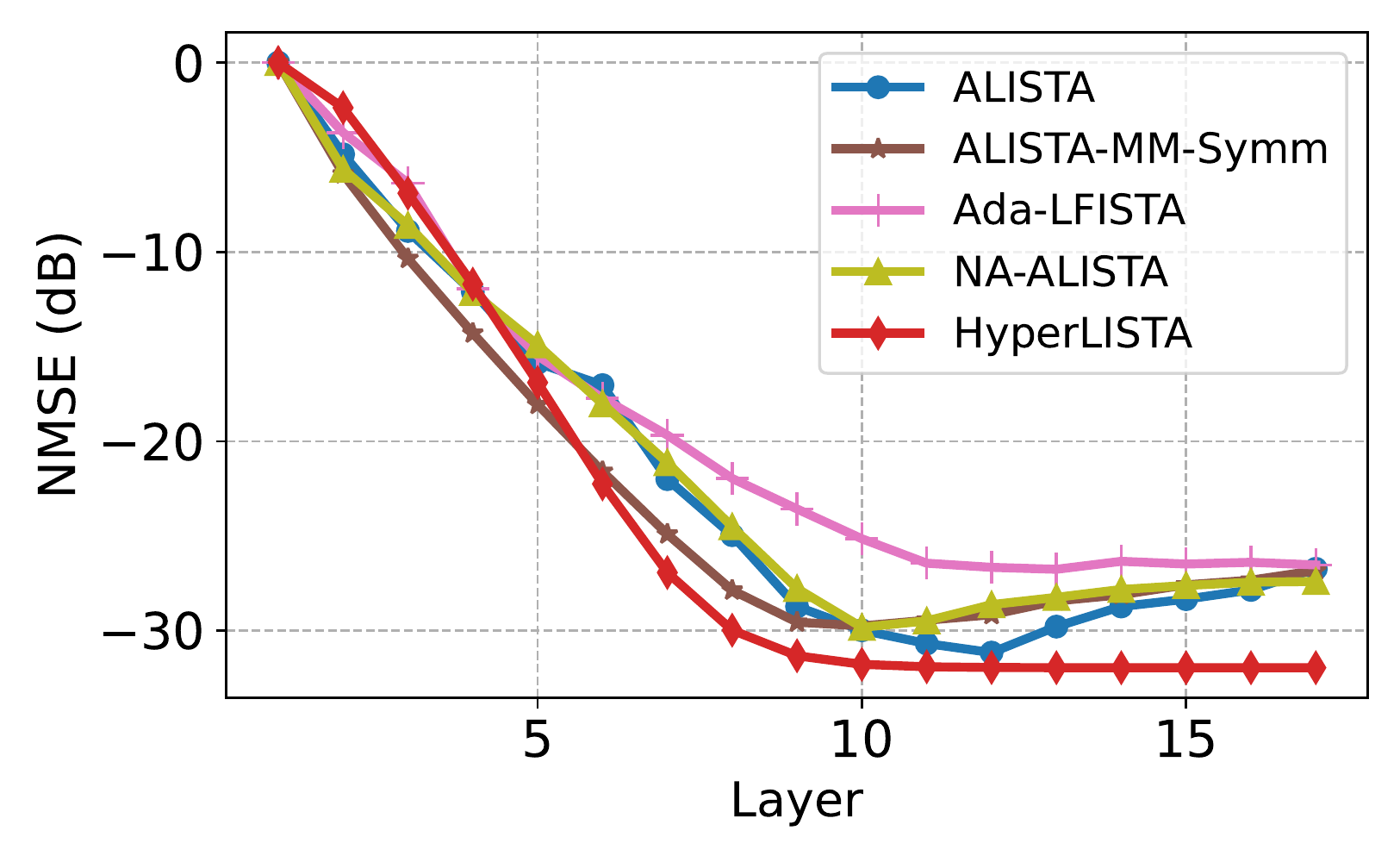}
     \caption{Noise level changed to SNR=30dB.}
     \label{fig:exp-adapt-noise}
\end{subfigure}
\caption{Adaptivity experiments. All models are trained in the noiseless case with $p=0.1$, $\sigma=1.0$, shown in subfigure (a). We directly apply the models trained in (a) to three different settings, which changes $p$ to 0.15, $\sigma$ to 2 and SNR of the noises to 30 respectively, shown in subfigures (b), (c), (d).}
\label{fig:comparison-adapt}
\end{figure}

\subsection{Validation of instance-optimal parameter selection and grid search}

In this subsection, we show that the adaptive parameterization in Eqns (\ref{eq:theta_formula}) - (\ref{eq:p_formula}), i.e., HyperLISTA, can perform as well as or even better than the uniform parameterization in (\ref{eq:alista-momentum}), and we also show the efficacy of the grid search method.
We optimize HyperLISTA-Full using grid search and train HyperLISTA-BP with backpropagation-based method (SGD), respectively. For a fair comparison with HyperLISTA-BP, we introduce a variant of HyperLISTA-Full in which we only grid-search $c_1$ and $c_2$ in (\ref{eq:theta_formula}) and (\ref{eq:beta_formula}) and use the same $p^{(k)}$ as manually selected in HyperLISTA-BP. We denote this variant as \textit{HyperLISTA-GS}.
Figure~\ref{fig:sec42} show that grid search can find even better solutions than backpropagation and the uniform parameterization. When we further use grid search for searching $c_1,c_2,c_3$ simultaneously (HyperLISTA-Full), there is more performance boost.

\subsection{Comparison of adaptivity with previous work}

We conduct experiments when there exists mismatch between the training and testing. We instantiate training/testing mismatch in terms of different sparsity levels in the sparse vectors $x^*$, different distributions of non-zero elements in $x^*$, and different additive noise levels. We compare HyperLISTA with ALISTA \cite{liu2019alista} and its variant ALISTA-MM-Symm, Ada-LFISTA \cite{aberdam2020ada} and NA-ALISTA \cite{behrens2021neurally} which also dynamically generates parameters per input using an external LSTM. Figure~\ref{fig:comparison-adapt} show that in the original and all three mismatched cases, HyperLISTA achieves the best performance, especially showing excellent robustness to the non-zero magnitude distribution change (Fig.~\ref{fig:exp-adapt-mag}). Note our computational overhead is also much lower than using LSTM.

\textbf{Directly unrolling HyperLISTA to more layers}
Another hidden gem in HyperLISTA is that it learns iteration-independent hyperparameters, which, once searched and trained (on some fixed layer number), can be naturally extended to an arbitrary number of layers. Although NA-ALISTA \cite{behrens2021neurally} can also be applied to any number of layers, it has worse performance and adaptivity compared to HyperLISTA, as we compare in Figure~\ref{fig:infinite}. 

We also compare HyperLISTA with another baseline of 40-layer ALISTA (named as \textit{ALISTA-Extra}), which is directly extrapolated from a 16-layer ALISTA model by reusing the last layer parameters. We can clearly observe that ALISTA cannot directly scale with more layers unrolled.

\begin{figure}
    \centering
    \includegraphics[width=0.80\textwidth]{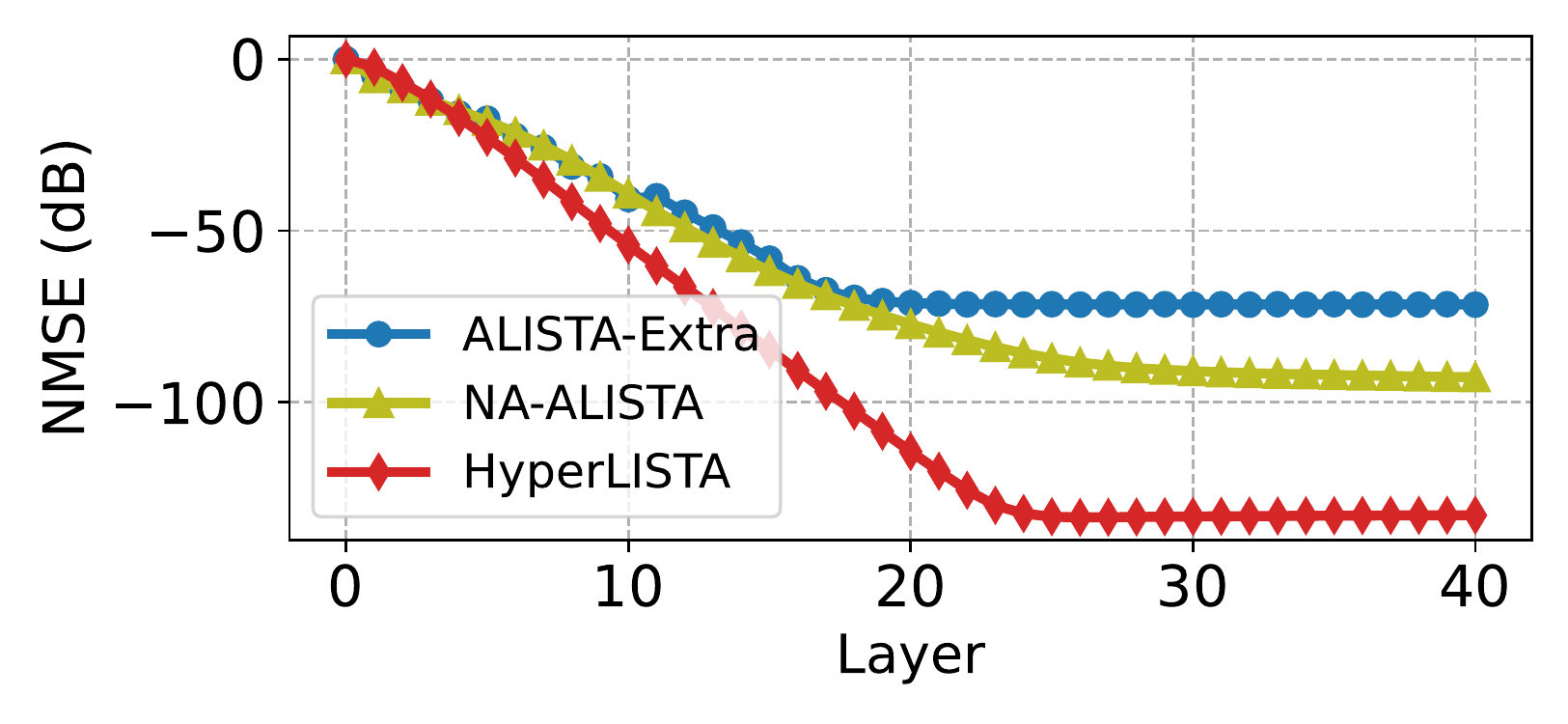}
    \caption{Direct unrolling to 40 layers (all models are trained with 16 layers).}
    \label{fig:infinite}
\end{figure}

\section{Conclusions and Discussions of Broad Impact}

In this paper, we propose a ultra-light weight unrolling network, \textit{HyperLISTA}, for solving sparse linear inverse problems. We first introduce a new ALISTA parameterization and augment it with momentum, and then propose an adaptive parameterization which reduces the training of HyperLISTA to only tuning three \textbf{instance- and layer-invariant} hyperparameters. HyperLISTA is theoretically proved and also empirically observed to have super-linear convergence rate as it uses instance-optimal parameters. Compared to previous work, HyperLISTA achieves faster convergence on the seen distribution (i.e., the training data distribution) and shows better adaptivity on unseen distributions.

As a limitation, we do not consider perturbations in dictionaries in this work yet. We also find that HyperLISTA, as a compact model similar to ALISTA \cite{liu2019alista}, may fail to perform well on signals with complicated structures such as real images, where the underlying sparse linear model (\ref{eq:linear_model}) becomes oversimplified and no longer holds well. We will investigate these further in the future.

We do not see that this work will impose any social risks to the society. Besides the intellectual merits, the largest potential societal impact of this work is that unrolling models can be trained more efficiently and meanwhile more adaptive, increasing the reusability.

{
\small
\bibliographystyle{plain}
\bibliography{egbib}
}

\clearpage

\appendix

\section{Empirical Observations on Superlinear Convergence}

In this section, we empirically verify the superlinear convergence of HyperLISTA with the switch to conjugate gradient iterations. We apply HyperLISTA to problems with a smaller size: $m=50, n=100$. The non-zero elements in the sparse vectors take value $1$, instead of following Gaussian distribution in previous experiments. Specifically, we discard support selection in this experiment. Note that setting $c_3=0$ in Equation (\ref{eq:ss-operator}), which corresponds to HyperLISTA without support selection, is also in the search space and hence a reasonable option. In this case, we switch to conjugate gradient iterations when the recovered sparse vectors converge to fixed supports for 10 iterations and stop the algorithm as the error is smaller than $-250$ dB. The supports are estimated by filtering the non-zero elements in the recovered vectors that are smaller than 10\% of the maximal magnitudes.

In a single sample testing case shown in Figure~\ref{fig:superlinear-single}, the algorithm switches to conjugate gradient iterations after the 12th layer and hence we see an accelerated convergence afterwards. This superlinear convergence rate is also observable on a set of 100 testing samples as shown in Figure~\ref{fig:superlinear-100}.

\begin{figure}[h]
    \begin{minipage}{0.48\textwidth}
      \centering
      \includegraphics[width=0.98\textwidth]{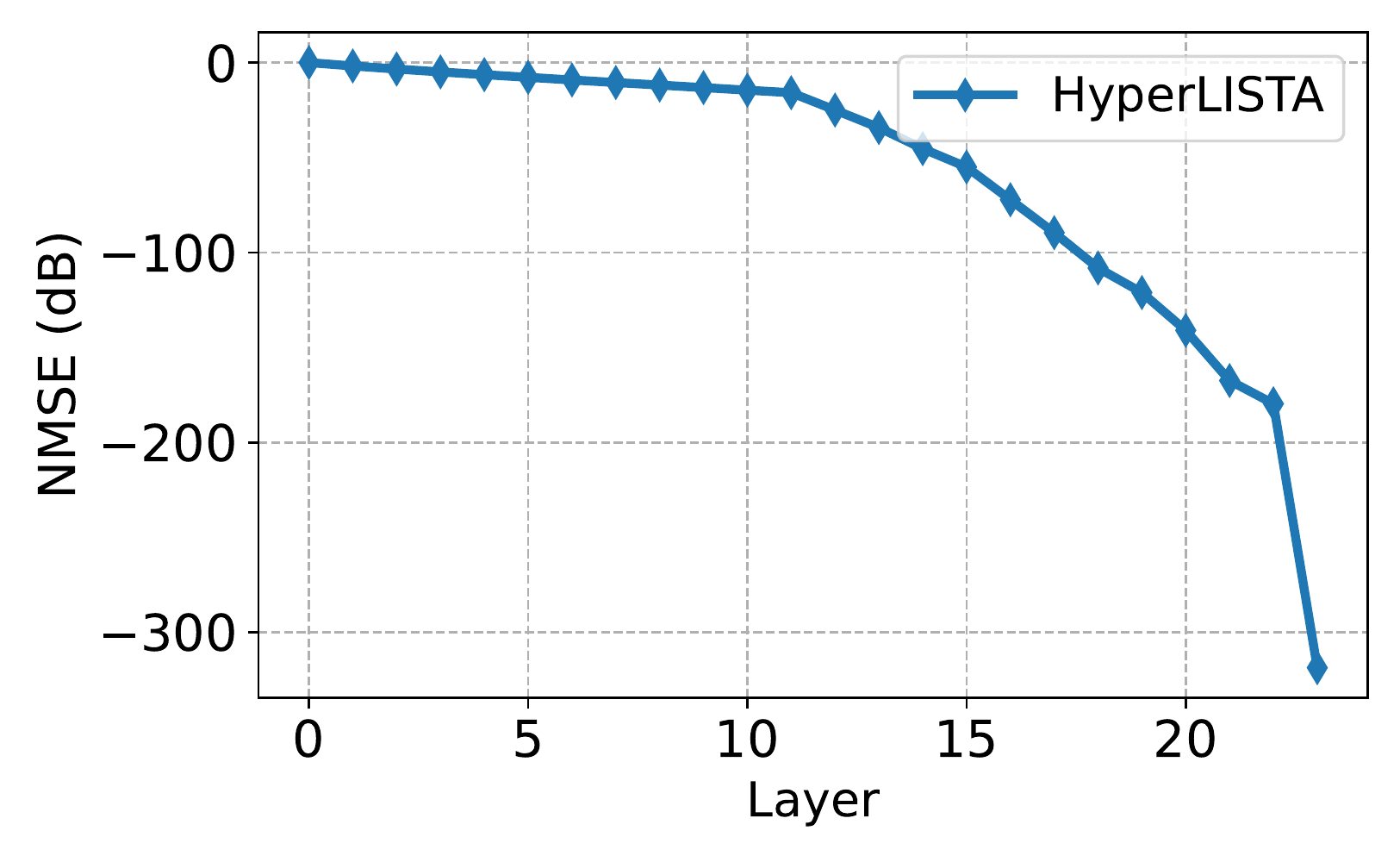}
      \caption{Convergence of HyperLISTA on a single sample.}
      \label{fig:superlinear-single}
    \end{minipage}%
    \hfill
    \begin{minipage}{0.48\textwidth}
      \centering
      \includegraphics[width=0.98\textwidth]{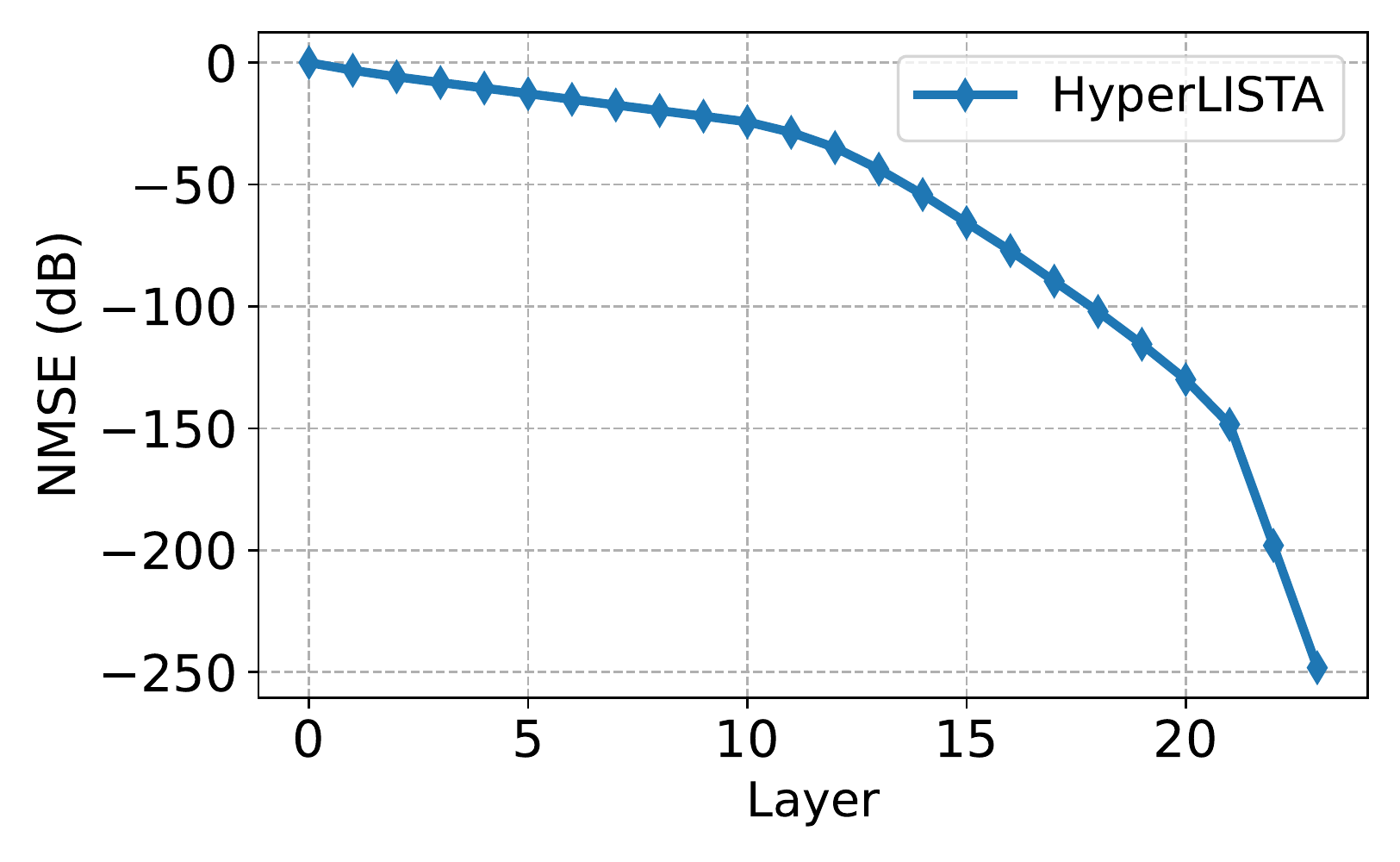}
      \caption{Convergence of HyperLISTA on 100 testing samples.}
      \label{fig:superlinear-100}
    \end{minipage}
\end{figure}

\section{Compressive Sensing Experiments on Natural Images}
\label{sec:cs_exp}

In this section, we conduct a set of compressive sensing experiments on natural images, following a similar setting in \cite{chen2018theoretical}. We randomly extract 16x16 patches (flatten into 256-dim vectors then) from the BSD500 dataset \cite{martin2001database} to form a training set of 51,200 patches, a validation and a test set of 2,048 patches respectively.

We compressively measure the image signals (256-dim vectors) using a random Gaussian matrix $\bm{\Phi}\in\mathbb{R}^{128\times256}$. We do not add measuring noises. We apply a dictionary learning algorithm in \cite{xu2016fast} to the training patches of BSD to learn a matrix $\bm{T}\in\mathbb{R}^{256\times512}$, where we assume any patch in a natural image can be represented by a sparse combination of the columns in $\bm{T}$. We then use $\bm{A}=\bm{\Phi}\bm{T}$ as the dictionary $\bm{A}$ in (\ref{eq:linear_model}). In summary, we denote the image patch as $\bm{f}$. The measurement we observe, which is also the input to the network, is generated by $\bm{b} = \bm{\Phi}\bm{f}$. We assume that there exists a sparse vector $\bm{x}^*$ that satisfies $\bm{f} \approx \bm{T}\bm{x}^*$ and hence $\bm{b}\approx\bm{A}\bm{x}^* = \bm{\Phi}\bm{T}\bm{x}^*$. We apply different methods to recover $\bm{x}^*$ from measurement $b$ and then reconstruct the image patch by multiplying the recovered sparse vector with $\bm{T}$.

Because we do not have access to the underlying ground truth $\bm{x}^*$ in this real-world scenario, we instead use the LASSO objective function as the loss function for training (of LISTA/ALISTA) or searching (of HyperLISTA), i.e.,
\begin{align}
Loss = \mathbb{E}\left[\frac{1}{2} \|\bm{b} - \bm{A}\hat{\bm{x}}(\bm{b})\|_2^2 + \lambda \|\hat{\bm{x}}(\bm{b})\|_1\right],
\end{align}
where $\hat{\bm{x}}(b)$ is the sparse vector recovered from the measurement $\bm{b}$ and $\lambda$ is the sparsity regularization coefficient. We set $\lambda=0.05$ in the experiment. We then reconstruct the image signal by $\hat{\bm{f}}=\bm{T}\hat{\bm{x}}$.

We train LISTA and ALISTA \cite{liu2019alista}, and perform grid search to find optimal hyperparameters in HyperLISTA. These three models all have 16 layers. The model performance is evaluated by the average PSNR of the reconstructed images on the standard testing images in Set 11 used in Recon-Net \cite{kulkarni2016reconnet}. The results are shown in the table below, where we also compare with FISTA and Recon-Net. HyperLISTA outperforms other baselines by clear margins. Note that the performance of LISTA is lower than that reported in \cite{chen2018theoretical} where 4,000,000 patches are used for training, while we only use 51,200 training patches. We provide visual results of the recovered images in Figure~\ref{fig:cs_visual_res}, where we can also see HyperLISTA has less patching artifacts compared to ALISTA and FISTA.

\begin{figure}
    \centering
    \includegraphics[width=0.95\textwidth]{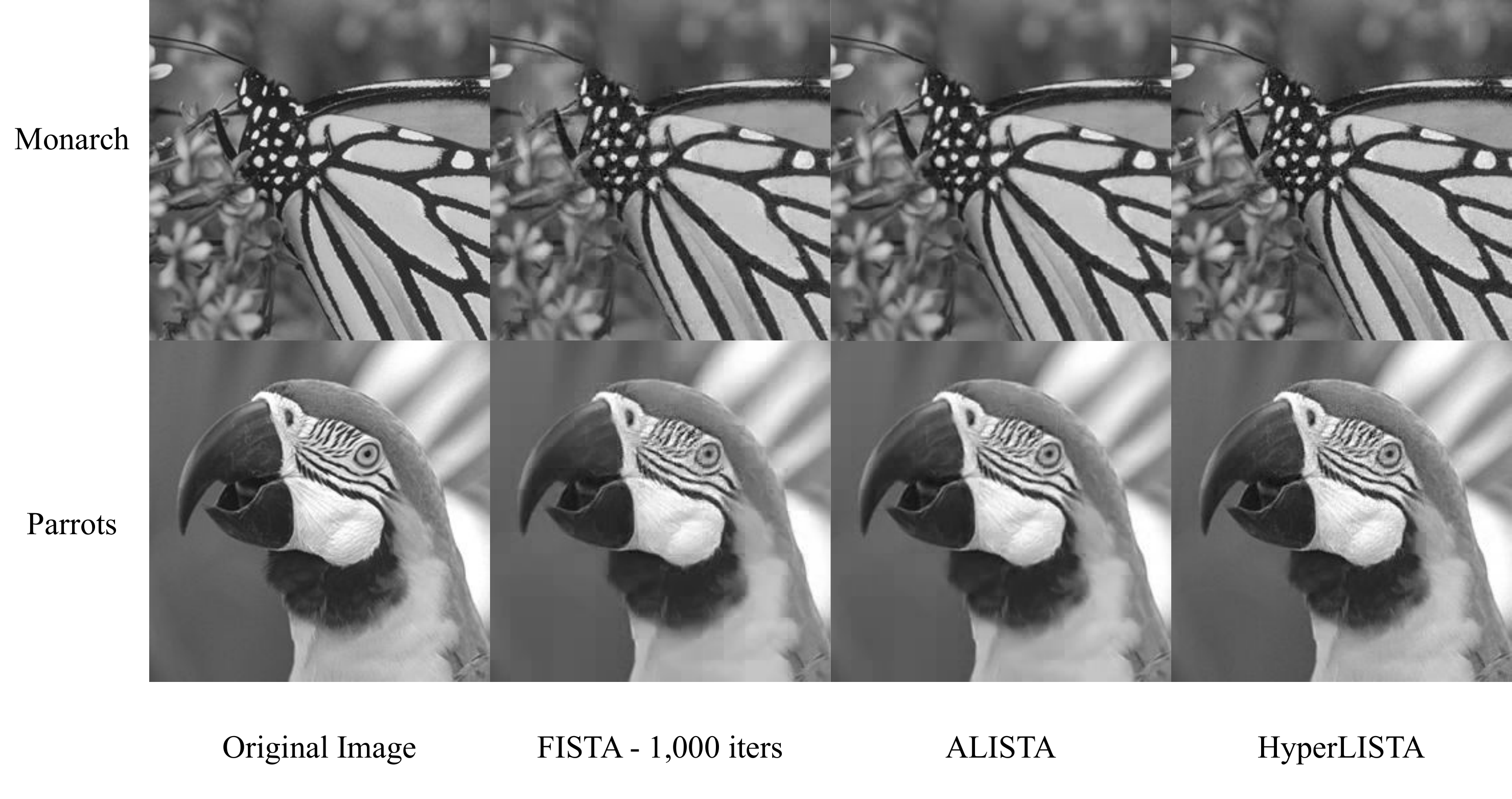}
    \caption{We visually present the recovered images using 1,000 iterations of FISTA and 16-layer ALISTA and HyperLISTA. We can see that HyperLISTA has much less artifacts at the boundaries of patches.}
    \label{fig:cs_visual_res}
\end{figure}

\begin{table}[t]
\centering
\begin{tabular}{@{}ccccccc@{}}
\toprule
Method    & \begin{tabular}[c]{@{}c@{}}FISTA\\ 16 iters\end{tabular} & \begin{tabular}[c]{@{}c@{}}FISTA\\ 1,000 iters\end{tabular} & LISTA & Recon-Net & ALISTA & HyperLISTA \\ \midrule
PSNR (dB) & 29.87                                                    & 30.92                                                       & 30.85 & 31.39     & 32.24  & 33.46      \\ \bottomrule
\end{tabular}
\end{table}

\section{Numerical Algorithm to Calculate Symmetric Dictionary}
In this section, we describe how we solve the optimization problem (\ref{eq:w-new}):
\[    \min_{\bm{G},\bm{D}} \|\bm{D}^T\bm{D}-\bm{I}\|^2_F + \frac{1}{\alpha}\|\bm{D}-\bm{G}\bm{A}\|^2_F,\quad
    \text{s.t. diag}(\bm{D}^T \bm{D}) = \mathbf{1}.\]
Inspired by \cite{lu2018optimized}, we use an alternative update algorithm to solve the above problem. Specifically, we first fix $\bm{G}$ and update $\bm{D}$ with projected gradient descent (PGD):
\begin{equation}
    \label{eq:w-new-solve-d}
    \bm{D} \leftarrow \mathcal{P}\Big(\bm{D} - \zeta  \bm{D}\big(\bm{D}^T\bm{D}-\bm{I}\big) - \frac{\zeta}{\alpha}(\bm{D}-\bm{G}\bm{A})\Big),
\end{equation}
where $\mathcal{P}$ is the projection operator on the constraint $ \text{diag}(\bm{D}^T \bm{D}) = \mathbf{1}$: normalizing each column of $\bm{D}$. Scalar $\zeta>0$ is the step size. Secondly, we fix $\bm{D}$ and calculate the minimizer of $\bm{G}$ with closed-form solution:
\begin{equation}
    \label{eq:w-new-solve-g}
    \bm{G} \leftarrow \bm{D} \bm{A}^{+},
\end{equation}
Directly using such alternative update algorithm is usually slow when $\alpha$ is small since the condition number is large. However, if we choose $\alpha$ as a large number, the difference between $\bm{D}$ and $\bm{G}\bm{A}$ is not enough penalized. Thus, we use first set $\alpha$ as a large number and solve (\ref{eq:w-new}) with (\ref{eq:w-new-solve-d}) and (\ref{eq:w-new-solve-g}). As long as we detect convergence, we tune $\alpha$ as a smaller number. Repeat the procedure until $\bm{D} \approx \bm{G}\bm{A}$. The whole algorithm is described in Algorithm \ref{algo:w-new}.

\begin{algorithm2e}[ht]
\SetKwInOut{initial}{Initialize}
\SetKw{Break}{break}
\KwIn{The original dictionary $\bm{A}$.}
\initial{$\bm{D} = \bm{A},\bm{G} = \bm{I}$. $\zeta = \alpha = 0.1$.}
\For{$j = 0,1,2,\ldots$ until convergence} {
Update $\bm{D}$ with (\ref{eq:w-new-solve-d}).\\
Update $\bm{G}$ with (\ref{eq:w-new-solve-g}).\\
Calculate $f_1 = \|\bm{D}^T\bm{D}-\bm{I}\|_F^2$.\\
Calculate $f_2 = \|(\bm{G}\bm{A})^T\bm{G}\bm{A}-\bm{I}\|_F^2$\\
\If{Two consecutive $f_1$s are close enough} {
$\alpha \leftarrow 0.1\alpha$.\\
$\zeta \leftarrow 0.1 \zeta$.\\
\If{$f_1$ and $f_2$ are close enough}{
\Break
}
}
}
\KwOut{$\bm{D}$}
\caption{Dictionary Solver}\label{algo:w-new}
\end{algorithm2e}

\section{Proof of Theorem \ref{prop:alista-momentum}}

\begin{proof}At the beginning of the proof, we define some constants depending on $\mu,s,B,\underline{B}$:
\begin{align}
    \bar{\beta} :=& \big(1-\sqrt{1-2\mu s + \mu}\big)^2, \label{eq:bar-beta}\\
    C_0 := & \max\Big(1, \frac{5(1+\bar{\beta})}{\big(4\bar{\beta} - (\bar{\beta} - \mu s + \mu)^2\big)(2\mu s - \mu)}\Big) \sqrt{s}B, \label{eq:c_0-proof}\\
    \widetilde{K}_0 :=& \Big\lceil \frac{\log(\underline{B})-\log(2 C_0)}{\log(2\mu s - \mu)} \Big\rceil + 1, \label{eq:critical-k}
\end{align}
In the statement of the theorem, we assume ``noise level $\sigma$ small enough." Here we give the specific condition on $\sigma$:
\begin{equation}
    \label{eq:proof-sigma-condition}
    \sigma \leq \min\Big( \frac{\underline{B}}{2}, \sqrt{s} B (2 \mu s - \mu)^{\widetilde{K}_0} \Big) \frac{1-2\mu s + \mu}{2sC_W}.
\end{equation}

Then we define
the parameters we choose in the proof:
\begin{align}
        \theta^{(k)} =& \mu \sup_{(\bm{x}^*,\varepsilon)\in \X(B,\underline{B}, s, \sigma)} \Big\{ \|\bm{x}^{(k)}-\bm{x}^\ast\|_1 + C_W\sigma\Big\}, \label{eq:proof-theta-formula}\\
        \gamma^{(k)}= & 1, \label{eq:proof-gamma-formula}\\
        \beta^{(k)} = & 
        \begin{cases}
        0 ,& k \leq \widetilde{K}_0\\
        \bar{\beta},& k > \widetilde{K}_0
        \end{cases}
        \label{eq:proof-beta-formula}\\
        p^{(k)} =& \min\Big(\frac{k}{\widetilde{K}_0}n,n\Big), \label{eq:proof-p-formula}
\end{align}
It is easy to check that the conclusion (\ref{eq:para_seq}) in the theorem is satisfied.

The whole proof scheme consists of several steps: proving no false positives for all $k$; proving the conclusion for $k \leq \widetilde{K}_0$; and proving the conclusion for $k > \widetilde{K}_0$.

\textbf{Step 1: no false positives.}
Firstly, we take an $(\bm{x}^*,\varepsilon)\in \X(B,\underline{B}, s, \sigma)$. Let $S = \text{support}(\bm{x}^*)$. We want to prove by induction that, as long as (\ref{eq:proof-theta-formula}) holds, $x^{(k)}_i = 0, \forall i \notin S, \forall k$ (no false positives). The statement in the theorem gives $\bm{x}^{(0)}=\bm{0}$. We follow the same proof line with that in \cite{chen2018theoretical} and obtain $\bm{x}^{(1)}=\bm{0}$. Fixing $k \geq 1$, and assuming $x^{(t)}_i = 0, \forall i\notin S, t = 0,1,2,\cdots,k$, we have
\[
\begin{aligned}
x^{(k+1)}_i =& \eta_{\theta^{(k)}}^{p^{(k)}}\Big(x^{(k)}_i - (\bm{W}_{:,i})^T (\bm{A}\bm{x}^{(k)} - \bm{b}) + \beta^{(k)} \big(x^{(k)}_i - x^{(k-1)}_{i}\big) \Big)\\
=& \eta_{\theta^{(k)}}^{p^{(k)}}\Big( - \sum_{j\in S} (\bm{W}_{:,i})^T \bm{A}_{:,j} (x^{(k)}_j - x^*_j)  + (\bm{W}_{:,i})^T\varepsilon\Big),\quad \forall i \notin S.
\end{aligned}
\]
By the definition of $\mu$: 
\begin{equation}
    \label{eq:mu-definition}
    \mu = \max_{i\neq j}\Big|(\bm{D}_{:,i})^T\bm{D}_{:,j}\Big| = \max_{i\neq j}\Big|(\bm{D}^T\bm{D})_{i,j}\Big| = \max_{i\neq j}\Big|(\bm{W}^T\bm{A})_{i,j}\Big|,
\end{equation}
we have 
\[
\mu\|\bm{x}^{(k)} - \bm{x}^*\|_1 \geq \Big|- \sum_{j\in S} (\bm{W}_{:,i})^T \bm{A}_{:,j} (x^{(k)}_j - x^*_j)  \Big|,\quad \forall i \notin S.\]
By the definition of $C_W$, we have
\[C_W \sigma \geq \max_{1\leq j \leq m}\big|W_{j,i}\big| \cdot  \|\varepsilon\|_1 \geq \Big|  (\bm{W}_{:,i})^T\varepsilon \Big|,\quad \forall i = 1,\cdots, n.\]
Then we can obtain a lower bound for the threshold $\theta^{(k)}$:
\[\theta^{(k)} \geq \mu\|\bm{x}^{(k)} - \bm{x}^*\|_1+C_W\sigma \geq \Big|- \sum_{j\in S} (\bm{W}_{:,i})^T \bm{A}_{:,j} (x^{(k)}_j - x^*_j) + (\bm{W}_{:,i})^T\varepsilon \Big|,\]
which implies $x^{(k+1)}_i=0,\forall i\notin S$ by the definition of $\eta_{\theta^{(k)}}^{p^{(k)}}$ in (\ref{eq:ss-operator}). By induction, we have
\begin{equation}
    \label{eq:proof-no-false-positive}
    x^{(k)}_i = 0, \forall i \notin S, \quad \forall k.
\end{equation}

\textbf{Step 2: analysis for $k \leq \widetilde{K}_0$.}
Since $\beta^{(k)}=0$ as $k \leq \widetilde{K}_0$, ALISTA-MM reduces to ALISTA. One can easily follow the proof line of Theorem 3 in \cite{chen2018theoretical} to obtain 
\[
    \|\bm{x}^{(k)}-\bm{x}^\ast\|_2 \leq s B (2\mu s- \mu)^k + \frac{2sC_W}{1+\mu-2\mu s}\sigma
\]
Here we introduce a new proof method to obtain a better bound:
\begin{equation}
    \label{eq:proof-alista-bound-new}
    \|\bm{x}^{(k)}-\bm{x}^\ast\|_2 \leq \sqrt{s} B (2\mu s- \mu)^k + \frac{2sC_W}{1+\mu-2\mu s}\sigma.
\end{equation}
To start our proof, we combine ALISTA-Momentum formula (\ref{eq:alista-momentum}) and (\ref{eq:proof-no-false-positive}) and obtain
\[
\bm{x}^{(k+1)}_S = ~ \eta_{\theta^{(k)}}^{p^{(k)}}\Big(\bm{x}^{(k)}_S - (\bm{W}_{:,S})^T (\bm{A}_{:,S} \bm{x}^{(k)}_S - \bm{b}) + \beta^{(k)} \big(\bm{x}^{(k)}_S-\bm{x}^{(k-1)}_S\big) \Big).
\]
With $\bm{b}=\bm{A}\bm{x}^*+\varepsilon$, we have
\[\bm{x}^{(k+1)}_S = ~ \eta_{\theta^{(k)}}^{p^{(k)}}\Big(\bm{x}^{(k)}_S - (\bm{W}_{:,S})^T \bm{A}_{:,S} (\bm{x}^{(k)}_S - \bm{x}^*_S) + (\bm{W}_{:,S})^T\varepsilon + \beta^{(k)} \big(\bm{x}^{(k)}_S-\bm{x}^{(k-1)}_S\big) \Big).\]
To analyze the thresholding operator $\eta_{\theta^{(k)}}^{p^{(k)}}$, we define $\bm{v}^{(k)}$ elementwisely as (The set $S^{p^{(k)}}$ is defined in (\ref{eq:spk}).):
\begin{equation}
    \label{eq:proof-v-formula}
    v^{(k)}_i \begin{cases} 
                        \in [-1,1]\quad &\text{if $i \in S, x^{(k+1)}_i= 0$} \\
                         =\text{sign}\big(x^{(k+1)}_i\big)\quad &\text{if $i \in S, x^{(k+1)}_i \neq 0$, } i\notin S^{p^{(k)}}\big(\bm{x}^{(k+1)}\big),\\
                         = 0 \quad &\text{if $i \in S, x^{(k+1)}_i \neq 0$, } i\in S^{p^{(k)}}\big(\bm{x}^{(k+1)}\big).
                    \end{cases}
\end{equation}
Given (\ref{eq:proof-v-formula}), we have
\begin{equation}
    \label{eq:proof-general}
    \begin{aligned}
    \bm{x}^{(k+1)}_S = &~ \bm{x}^{(k)}_S - (\bm{W}_{:,S})^T \bm{A}_{:,S} (\bm{x}^{(k)}_S - \bm{x}^*_S) + \beta^{(k)} \big(\bm{x}^{(k)}_S-\bm{x}^{(k-1)}_S\big) \\
    & + (\bm{W}_{:,S})^T\varepsilon - \theta^{(k)} \bm{v}^{(k)}, \quad \forall k \geq 1.
    \end{aligned}
\end{equation}
Subtracting $\bm{x}^*$ from the both sides of (\ref{eq:proof-general}), we obtain
\begin{equation}
    \label{eq:proof-x-star}
    \begin{aligned}
\bm{x}^{(k+1)}_S - \bm{x}^*_S = & \Big((1+\beta^{(k)})\bm{I}_S - (\bm{W}_{:,S})^T \bm{A}_{:,S} \Big) (\bm{x}^{(k)}_S - \bm{x}^*_S) - \beta^{(k)} \big(\bm{x}^{(k-1)}_S - \bm{x}^*_S\big) \\
 & + (\bm{W}_{:,S})^T\varepsilon - \theta^{(k)} \bm{v}^{(k)}.
\end{aligned}
\end{equation}
The first term of the above line can be bounded by
\[\Big\|(\bm{W}_{:,S})^T\varepsilon\Big\|_2 \leq \Big\|(\bm{W}_{:,S})^T\varepsilon\Big\|_1 \leq \sum_{i\in S} \Big|(\bm{W}_{:,i})^T\varepsilon\Big| \leq s C_W \sigma.\]
To analyze the norm of vector $\bm{v}^{(k)}$, we define 
\begin{equation}
    \label{eq:proof-sk}
    \begin{aligned}
    S^{(k)}(\bm{x}^*,\varepsilon) = & \Big\{i|i \in S, x^{(k+1)}_i \neq 0, i\in S^{p^{(k)}}(\bm{x}^{(k+1)})\Big\},\\ \overline{S}^{(k)}(\bm{x}^*,\varepsilon) = & S \backslash S^{(k)},
    \end{aligned}
\end{equation}
where $S^{(k)}$ depends on $\bm{x}^*$ and $\varepsilon$ because $\bm{x}^{(k+1)}$ depends on $\bm{x}^*$ and $\varepsilon$. Given this definition, we have
\[\|\bm{v}^{(k)}\|_2 \leq \sqrt{|\overline{S}^{(k)}(\bm{x}^*,\varepsilon)|}.\]
As $k \leq \widetilde{K}_0$, $\beta^{(k)}=0$ (\ref{eq:proof-beta-formula}), equation (\ref{eq:proof-x-star}) implies
\[\|\bm{x}^{(k+1)}_S - \bm{x}^*_S\|_2 \leq \big\|\bm{I}_S - (\bm{W}_{:,S})^T \bm{A}_{:,S} \big\|_2 \|\bm{x}^{(k)}_S - \bm{x}^*_S\|_2 + s C_W \sigma + \theta^{(k)} \sqrt{|\overline{S}^{(k)}(\bm{x}^*,\varepsilon)|}, \quad k \leq \widetilde{K}_0. \]
The definition of $\mu$ (\ref{eq:mu-definition}) means each off-diagonal element of matrix $\bm{W}^T\bm{A}$ is no greater than $\mu$. Thus, the sum of all off-diagonal elements of matrix $(\bm{W}^T\bm{A})_{S,S}$ is no greater than $\mu(|S|-1)$. The diagonal elements of matrix $\bm{I}_S - (\bm{W}^T\bm{A})_{S,S}$ are actually zeros by the definition of $\bm{W}$ in (\ref{eq:w-new}). Then Gershgorin circle theorem \cite{golub2013matrix} implies
\begin{equation}
    \label{eq:circle-theorem-eigen-bound}
    \|\bm{I}_S - (\bm{W}^T\bm{A})_{S,S} \|_2 \leq \mu(|S|-1) \leq \mu(s-1) .
\end{equation}
Since $x^{(k)}_i = 0, \forall i \notin S$, we have $\|\bm{x}^{(k)}-\bm{x}^*\|_2 = \|\bm{x}^{(k)}_S - \bm{x}^*_S\|_2$ for all $k$. Taking supremum on both sides of inequality (\ref{eq:proof-general}) over ${(\bm{x}^*,\varepsilon)\in \X(B,\underline{B}, s, \sigma)}$, we have 
\[
\begin{aligned}
\sup_{(\bm{x}^*,\varepsilon)\in \X(B,\underline{B}, s, \sigma)}\|\bm{x}^{(k+1)} - \bm{x}^*\|_2 \leq & \big(\mu ( s -1)\big)\sup_{(\bm{x}^*,\varepsilon)\in \X(B,\underline{B}, s, \sigma)}\|\bm{x}^{(k)} - \bm{x}^*\|_2 + s C_W \sigma  \\ 
& + \theta^{(k)} \sqrt{\sup_{(\bm{x}^*,\varepsilon)\in \X(B,\underline{B}, s, \sigma)}|\overline{S}^{(k)}(\bm{x}^*,\varepsilon)|}.
\end{aligned}
\]
Since $\overline{S}^{(k)}$ is a subset of $S$, it holds that
\[\Big|\overline{S}^{(k)}(\bm{x}^*,\varepsilon)\Big| \leq |S| \leq s.\]
By the definition of $\theta^{(k)}$ in (\ref{eq:theta_formula}), it holds that
\[
\begin{aligned}
\theta^{(k)} =& \sup_{(\bm{x}^*,\varepsilon)\in \X(B,\underline{B}, s, \sigma)} \Big\{\mu \|\bm{x}^{(k)}-\bm{x}^*\|_1\Big\} + C_W\sigma \\
\leq & \sup_{(\bm{x}^*,\varepsilon)\in \X(B,\underline{B}, s, \sigma)} \Big\{\mu \sqrt{s}\|\bm{x}^{(k)}-\bm{x}^*\|_2\Big\} + C_W\sigma
\end{aligned}
\]
Combining the above inequalities, we get
\[
\sup_{\bm{x}^*\in \X(B,\underline{B}, s)}\|\bm{x}^{(k+1)} - \bm{x}^*\|_2 \leq  \big(2\mu s - \mu \big)\sup_{\bm{x}^*\in \X(B,\underline{B}, s)}\|\bm{x}^{(k)} - \bm{x}^*\|_2 + 2 s C_W\sigma.
\]
Denoting $e^{(k)} = \sup_{(\bm{x}^*,\varepsilon)\in \X(B,\underline{B}, s, \sigma)}\|\bm{x}^{(k)} - \bm{x}^*\|_2$, applying the above inequality repeatedly as $k=0,1,2,\cdots$, we obtain
\[
\begin{aligned}
e^{(k)} \leq & \big(2\mu s - \mu \big)^k e^{(0)} + 2 s C_W\sigma \Big( \sum_{t=0}^k (2\mu s - \mu)^t \Big)\\
\leq & \big(2\mu s - \mu \big)^k e^{(0)} + \frac{2sC_W}{1+\mu-2\mu s}\sigma.
\end{aligned}
\]
Since $\bm{x}^{(0)}=\bm{0}$, the term $e^{(0)}$ can be bounded by 
\[e^{(0)} = \sup_{(\bm{x}^*,\varepsilon)\in \X(B,\underline{B}, s, \sigma)}\|\bm{x}^*\|_2 \leq \sqrt{s} B \leq C_0. \]
Then we conclude with
\[\|\bm{x}^{(k)}-\bm{x}^\ast\|_2 \leq C_0 (2\mu s- \mu)^k + \frac{2sC_W}{1+\mu-2\mu s}\sigma, \quad \forall k \leq \widetilde{K}_0.\]

\textbf{Step 3: analysis for $k>\widetilde{K}_0$.} 

Define 
\begin{equation}
    \label{eq:proof-delta-x-define}
    \delta \bm{x}_S = \Big((\bm{W}_{:,S})^T \bm{A}_{:,S}\Big)^{-1}(\bm{W}_{:,S})^T\varepsilon, \quad \overline{\bm{x}}_S=\bm{x}^*_S +\delta \bm{x}_S.
\end{equation}
It can be proved that 
\begin{equation}
    \label{eq:delta-x-bound}
    \|\delta \bm{x}_S\|_2 \leq \frac{1}{1+\mu-\mu s} \|(\bm{W}_{:,S})^T\varepsilon\|_2 \leq \frac{1}{1+\mu-\mu s} \|(\bm{W}_{:,S})^T\varepsilon\|_1 \leq \frac{sC_W}{1+\mu-\mu s}\sigma.
\end{equation}

Then (\ref{eq:proof-general}) can be rewritten as
\[\bm{x}^{(k+1)}_S = ~ \bm{x}^{(k)}_S - (\bm{W}_{:,S})^T \bm{A}_{:,S} (\bm{x}^{(k)}_S - \overline{x}_S) + \beta^{(k)} \big(\bm{x}^{(k)}_S-\bm{x}^{(k-1)}_S\big) - \theta^{(k)} \bm{v}^{(k)}.\]
Subtracting $\overline{\bm{x}}_S$ from both sides of the above formula, we obtain
\[
    \bm{x}^{(k+1)}_S - \overline{\bm{x}}_S = \Big((1+\beta^{(k)})\bm{I}_S - (\bm{W}_{:,S})^T \bm{A}_{:,S} \Big) (\bm{x}^{(k)}_S - \overline{\bm{x}}_S) - \beta^{(k)} \big(\bm{x}^{(k-1)}_S - \overline{\bm{x}}_S\big) - \theta^{(k)} \bm{v}^{(k)}.
\]
Plug $\beta^{(k)}=\bar{\beta}$ for $k > \widetilde{K}_0$ (\ref{eq:proof-beta-formula}) into the above equation. We obtain
\begin{equation}
    \label{eq:proof-momentum-iteration}
\underbrace{\begin{bmatrix}
\bm{x}^{(k+1)}_S - \overline{\bm{x}}_S \\
\bm{x}^{(k)}_S - \overline{\bm{x}}_S
\end{bmatrix}}_{\bm{z}^{(k)}}
=
\underbrace{\begin{bmatrix}
(1+\bar{\beta})\bm{I}_S - (\bm{W}_{:,S})^T \bm{A}_{:,S} & -\bar{\beta} \bm{I}_S\\
\bm{I}_S & \bm{0}
\end{bmatrix}}_{\bm{M}}
\underbrace{\begin{bmatrix}
\bm{x}^{(k)}_S - \overline{\bm{x}}_S \\
\bm{x}^{(k-1)}_S - \overline{\bm{x}}_S
\end{bmatrix}}_{\bm{z}^{(k-1)}}
- \theta^{(k)} 
{\begin{bmatrix}
\bm{v}^{(k)}\\
\bm{0}
\end{bmatrix}}.
\end{equation}
We claim that $\mathbf{M}$ can be decomposed with
\begin{equation}
    \label{eq:proof-M-decompose}
    \mathbf{M} = \mathbf{T} \Lambda_{\mathbf{M}} \mathbf{T}^{-1},
\end{equation}
where $\mathbf{T}\in\mathbb{C}^{2|S|\times2|S|}$ is nonsingular and $\Lambda_{\mathbf{M}}\in \mathbb{C}^{2|S|\times2|S|}$ is diagonal and they satisfy
\begin{equation}
    \label{eq:proof-M-decompose-bound}
\begin{aligned}
 \|\Lambda_{\mathbf{M}}\|_2 =& \sqrt{\bar{\beta}},\\
    \|\mathbf{T}\|_2 \leq& \sqrt{2 + 2\bar{\beta}}, \\
\|\mathbf{T}^{-1}\|_2 \leq& \frac{\sqrt{2 + 2\bar{\beta}}}{2 (4\bar{\beta} - (\bar{\beta} - \mu s + \mu)^2)}.
\end{aligned}
\end{equation}
We will prove claims (\ref{eq:proof-M-decompose}) and (\ref{eq:proof-M-decompose-bound}) later in step 4. Note that matrix $\mathbf{M}$ and its decomposition (\ref{eq:proof-M-decompose}) are not uniform for different instances that satisfy Assumption \ref{assume:basic}, but the upper bounds in (\ref{eq:proof-M-decompose-bound}) are \emph{ \textbf{uniform}} for all instances that satisfy $|S| \leq s$ and $s \leq (1+1/\mu)/2$. This is why we do not only calculate the spectral norm of $\mathbf{M}$ as the standard heavy-ball momentum proof: we want to get an \emph{ \textbf{uniform}} bound for all instances but not a specific bound for each instance.

Since $\mathbf{M}$ is not symmetric, (\ref{eq:proof-momentum-iteration}) indicates that $\|\bm{x}^{(k)}_S - \overline{\bm{x}}_S\|_2$ is NOT {monotonically decreasing} due to the momentum term. Thus, $\bm{v}^{(k)}=\bm{0}$ cannot imply $\bm{v}^{(k+1)}=\bm{0}$. We have to estimate the upperbound of $\|\bm{x}^{(k)}_S - \overline{\bm{x}}_S\|_2$ by \emph{\textbf{induction}}:
\begin{itemize}
    \item First we prove that $\bm{v}^{(k)}=\bm{0}$ as $k = \widetilde{K}_0$.
    \item Then we prove that, if we assume $\bm{v}^{(t)}=\bm{0}$ for all t satisfies $ \widetilde{K}_0 \leq t \leq k (k > \widetilde{K}_0)$, we have 
    \[\|\bm{x}^{(k+1)}_S - \overline{\bm{x}}_S\|_2 \leq C_0 (2\mu s - \mu)^{\widetilde{K}_0}\Big(1-\sqrt{1-2\mu s + \mu}\Big)^{k+1-\widetilde{K}_0} ~ \text{and}~ \bm{v}^{(k+1)} = \bm{0}.\]
\end{itemize}
Now we prove the above two points one-by-one.

First we want to show that $\bm{v}^{(k)}=\bm{0}$ as $k = \widetilde{K}_0$. 
The definition of $\widetilde{K}_0$ (\ref{eq:critical-k}) implies 
\[C_0 (2\mu s - \mu)^{\widetilde{K}_0} < \underline{B}/2.\]
The assumption on $\sigma$ (\ref{eq:proof-sigma-condition}) implies
\[\frac{2sC_W}{1-2\mu s + \mu}\sigma \leq \underline{B}/2.\]
As $k = \widetilde{K}_0$, it holds that, for all $i \in S$
\begin{equation}
    \label{eq:proof-no-false-negative}
    |x^{(k)}_i - x^*_i| \leq \|\bm{x}^{(k)}-\bm{x}^*\|_2 \leq C_0 \big(2\mu s - \mu \big)^k + \frac{2sC_W}{1-2\mu s + \mu}\sigma < \underline{B}.
\end{equation}
Definition of $p^{(k)}$ in (\ref{eq:proof-p-formula}) implies $p^{(k)} = n$ for $k \geq \widetilde{K}_0$. Based on the definition of $\eta$ operator (\ref{eq:ss-operator}) and (\ref{eq:spk}), we have $S^{p^{(k)}} = \{1,2,\cdots,n\}$ and the operator $\eta^{p^{(k)}}_{\theta^{(k)}}$ is actually a hard thresholding. 
The above inequality (\ref{eq:proof-no-false-negative}), combined with $|x^*|_i \geq \underline{B}>0$, implies $x^{(k)}_i \neq 0, i \in S$. Since $p^{(k)}=n$, we have $i \in S^{(k)}(\bm{x}^*,\varepsilon)$ for all $ i \in S$ according to (\ref{eq:proof-sk}). In another word, $\bm{v}^{(k)}=\bm{0}$ as $k = \widetilde{K}_0$.

Second, we assume assume $\bm{v}^{(t)}=\bm{0}$ for all $\widetilde{K}_0 \leq t \leq k$ and analyze the variables for $k+1$. Equation (\ref{eq:proof-momentum-iteration}) and the assumption $\bm{v}^{(t)}=\bm{0}$ and the matrix decomposition (\ref{eq:proof-M-decompose}) lead to
\[\mathbf{T}^{-1}  \bm{z}^{(t)} = \Lambda_{\mathbf{M}} \mathbf{T}^{-1} \bm{z}^{(t-1)}, \quad \widetilde{K}_0 \leq t \leq k.\]
Then we have
\[ \mathbf{T}^{-1}  \bm{z}^{(k)} = \Big(\Lambda_{\mathbf{M}}\Big)^{k+1-\widetilde{K}_0} \mathbf{T}^{-1} \bm{z}^{(\widetilde{K}_0-1)}.\]
Thus, it holds that
\[\|\bm{z}^{(k)} \|_2 
\leq \|\Lambda_{\mathbf{M}}\|_2^{k+1-\widetilde{K}_0} \|\mathbf{T}\|_2 \|\mathbf{T}^{-1}\|_2 \|\bm{z}^{(\widetilde{K}_0-1)}\|_2 
\leq 
{\frac{(1+\bar{\beta})\|\bm{z}^{(\widetilde{K}_0-1)}\|_2}{4\bar{\beta} - (\bar{\beta} - \mu s + \mu)^2}}
\Big( \sqrt{\bar{\beta}}\Big)^{k+1-\widetilde{K}_0}. \]
By (\ref{eq:delta-x-bound}), (\ref{eq:proof-alista-bound-new}) and (\ref{eq:proof-sigma-condition}), we have
\[\|\bm{z}^{(\widetilde{K}_0-1)}\|_2 \leq 2 \|\bm{x}^{(\widetilde{K}_0-1)} - \overline{\bm{x}}\|_2 \leq 2 \|\bm{x}^{(\widetilde{K}_0-1)} - {\bm{x}^*}\|_2 + 2\|\delta \bm{x}_S\|_2 \leq 5 \sqrt{s} B (2 \mu s - \mu)^{\widetilde{K}_0-1} .\]
Consequently,
\[\|\bm{x}^{(k+1)}_S - \overline{\bm{x}}_S\|_2 \leq \|\bm{z}^{(k)} \|_2 \leq C_0 (2\mu s - \mu)^{\widetilde{K}_0}\Big(1-\sqrt{1-2\mu s + \mu}\Big)^{k+1-\widetilde{K}_0}.\]
The definition of $\widetilde{K}_0$ (\ref{eq:critical-k}) gives
\[C_0 (2\mu s - \mu)^{k} < \underline{B}/2, \quad \forall k \geq \widetilde{K}_0.\]
It's easy to check that the following inequality holds for all $s \leq (1+1/\mu)/2$:
\[1-\sqrt{1-2\mu s + \mu} < 2\mu s - \mu.\]
We conclude that
\[  |x^{(k+1)}_i - \overline{x}_i| \leq \|\bm{x}^{(k+1)}_S-\overline{\bm{x}}_S\|_2 \leq C_0 (2 \mu s -\mu)^{k+1} < \underline{B}/2, \quad \forall i \in S. \]
With similar arguments as (\ref{eq:proof-no-false-negative}), we obtain
\[|x^{(k+1)}_i - x^\ast_i| \leq |x^{(k+1)}_i - \overline{x}_i| + |\delta x_i| < \underline{B}/2 + \underline{B}/2 = \underline{B} , \quad \forall i \in S\]
and then $\bm{v}^{(k+1)}=\bm{0}$.

The induction proof is finished and we conclude that, for all $k > \widetilde{K}_0$, 
\[
\begin{aligned}
\|\bm{x}^{(k)} - \bm{x}^\ast\|_2 \leq& \|\bm{x}^{(k)}_S - \overline{\bm{x}}_S\|_2 + \|\delta \bm{x}_{S}\|_2 \\
\leq& C_0 (2\mu s - \mu)^{\widetilde{K}_0}\Big(1-\sqrt{1-2\mu s + \mu}\Big)^{k+1-\widetilde{K}_0} + \frac{sC_W}{1+\mu-\mu s}\sigma.
\end{aligned}
\]
As long as we prove (\ref{eq:proof-M-decompose}) and (\ref{eq:proof-M-decompose-bound}), we can finish the whole proof. Proofs of (\ref{eq:proof-M-decompose}) and (\ref{eq:proof-M-decompose-bound}) are actually an estimate of the spectrum of matrix $\mathbf{M}$ and given in ``Step 4."

\textbf{Step 4: spectrum analysis for $\mathbf{M}$.} Since $\bm{W}^T\bm{A}$ is parameterized in a symmetric way (Section \ref{sec:symmetric}), its submatrix $(\bm{W}_{:,S})^T\bm{A}_{:,S}$ is a real symmetric matrix and, consequently, it is orthogonal diagonalizable in the real space: 
\[(\bm{W}_{:,S})^T\bm{A}_{:,S} = \bm{U} \bm{\Lambda} \bm{U}^T\]
where $\bm{U}\in\mathbb{R}^{|S|\times|S|}$ is orthogonal and $\bm{\Lambda}\in\mathbb{R}^{|S|\times|S|}$ is diagonal. The matrix $\bm{\Lambda}$ is defined as $\bm{\Lambda} = \text{diag}(\lambda_1,\lambda_2,\cdots,\lambda_{|S|})$ and its elements are singular values of $(\bm{W}_{:,S})^T\bm{A}_{:,S}$. Thus, it holds that 
\[
\begin{aligned}
\mathbf{M}
=& \begin{bmatrix}
\bm{U} \big((1+\bar{\beta})\bm{I}_S - \bm{\Lambda} \big) \bm{U}^T & - \bm{U} \big(\bar{\beta} \bm{I}_S\big) \bm{U}^T\\
\bm{U} \big( \bm{I}_S \big) \bm{U}^T & \bm{0}
\end{bmatrix}\\
=& \begin{bmatrix}
\bm{U}  &  \bm{0}\\
\bm{0}  & \bm{U}
\end{bmatrix}
\begin{bmatrix}
(1+\bar{\beta})\bm{I}_S - \bm{\Lambda}  & - \bar{\beta} \bm{I}_S\\
 \bm{I}_S  & \bm{0}
\end{bmatrix}
\begin{bmatrix}
 \bm{U}^T & \bm{0}\\
\bm{0} & \bm{U}^T
\end{bmatrix}.\end{aligned}
\]
By properly permuting the index, we obtain
\begin{equation}
    \label{eq:proof-mat-decomposition}
    \mathbf{M} = \begin{bmatrix}
\bm{U}  &  \bm{0}\\
\bm{0}  & \bm{U}
\end{bmatrix}
\mathbf{P}
\begin{bmatrix}
\mathbf{B}_1 & & &\\
& \mathbf{B}_2 & & \\
& & \cdots &\\
& & & \mathbf{B}_{|S|}
\end{bmatrix}
\mathbf{P}^{-1}
\begin{bmatrix}
 \bm{U}^T & \bm{0}\\
\bm{0} & \bm{U}^T
\end{bmatrix}
\end{equation}
where $\mathbf{P}$ is a permutation operator with $\|\mathbf{P}\|_2=1$ and the blocks $\mathbf{B}_i$ is defined as
\begin{equation}
    \label{eq:proof-each-block}
    \mathbf{B}_i = \begin{bmatrix}
1+\bar{\beta} - \lambda_i  & - \bar{\beta} \\
 1 & 0
\end{bmatrix}, \quad \forall i = 1,2,\cdots, |S|.
\end{equation}
The two eigenvalues and eigenvectors of $\mathbf{B}_i$ can be calculated explicitly:
\[
\begin{aligned}
\lambda_{\mathbf{B}_i,1} =& \frac{1}{2}(1+\bar{\beta} - \lambda_i + \sqrt{(1+\bar{\beta} - \lambda_i)^2-4\bar{\beta}})\\
\lambda_{\mathbf{B}_i,2} =& \frac{1}{2}(1+\bar{\beta} - \lambda_i - \sqrt{(1+\bar{\beta} - \lambda_i)^2-4\bar{\beta}})\\
\mathbf{v}_{\mathbf{B}_i,1} =& \begin{bmatrix}
 \lambda_{\mathbf{B}_i,1}\\
 1
\end{bmatrix},\quad 
\mathbf{v}_{\mathbf{B}_i,2} = \begin{bmatrix}
 \lambda_{\mathbf{B}_i,2}\\
 1
\end{bmatrix}
\end{aligned}
\]
Define \[\delta(\lambda;\bar{\beta}) = (1+\bar{\beta} - \lambda)^2 - 4\bar{\beta}\]
that is a one dimensional quadratic function for $\lambda$. It is convex and its critical point is $\lambda_{\star} = 1+\bar{\beta}$. It has two zero points $(1\pm\sqrt{\bar{\beta}})^2$. According to (\ref{eq:circle-theorem-eigen-bound}), the eigenvalues of $(\bm{W}_{:,S})^T\bm{A}_{:,S}$ are bounded within the following interval \[1-\mu(s-1) \leq \lambda_i \leq 1+\mu(s-1),\quad \forall i = 1,2,\cdots, |S|.\]It's easy to check that the maximum of $\delta(\lambda;\bar{\beta})$ function is taken at the left boundary:
\[\argmax_{\lambda \in [1-\mu(s-1) , 1+\mu(s-1)]} \delta(\lambda;\bar{\beta}) = 1-\mu(s-1) . \]
Since it holds that\[1-\mu(s-1) >1 - 2\mu s + \mu = (1-\sqrt{\bar{\beta}})^2,\]
we obtain \[\delta\Big(1-\mu(s-1);\bar{\beta}\Big) < \delta\Big((1-\sqrt{\bar{\beta}})^2;\bar{\beta}\Big) = 0.\]
Thus, \[\delta(\lambda_i;\bar{\beta}) < 0, \forall i = 1,2,\cdots, |S|,\]which implies that eigenvalues of $\mathbf{B}_i$ has nonzero imaginary part and the two eigenvalues are complex conjugate. 
The magnitudes of the two eigenvalues can be calculated with
\[|\lambda_{\mathbf{B}_i,1}|^2=|\lambda_{\mathbf{B}_i,2}|^2 = \frac{1}{4} \Big( (1+\bar{\beta} - \lambda_i)^2 - (1+\bar{\beta} - \lambda_i)^2 + 4 \bar{\beta} \Big) = \bar{\beta}.\]
Define the imaginary part of $\lambda_{\mathbf{B}_i,1}$ and $\lambda_{\mathbf{B}_i,2}$ as $\Delta_i = - \delta(\lambda_i;\bar{\beta})$, that can be lower bounded by
\[\Delta_i \geq  4\bar{\beta} - (\bar{\beta} - \mu s + \mu)^2 > 0.\]
Concatenate the two eigenvectors together: $\mathbf{V}_i = [\mathbf{v}_{\mathbf{B}_i,1},\mathbf{v}_{\mathbf{B}_i,2}]$ and the two eignevalues: $\Lambda_{\mathbf{B}_i} = \begin{bmatrix}
  \lambda_{\mathbf{B}_i,1} & \\
&  \lambda_{\mathbf{B}_i,2}
\end{bmatrix}$. Then we have 
\[\mathbf{B}_i = \mathbf{V}_i \Lambda_{\mathbf{B}_i} (\mathbf{V}_i )^{-1}\]
Upper bounds for $\|\mathbf{V}_i\|_2$ and $\|(\mathbf{V}_i )^{-1}\|_2$:
\[
\begin{aligned}
&\|\mathbf{V}_i\|_2 = \bigg\| \begin{bmatrix}
 \lambda_{\mathbf{B}_i,1} &
 \lambda_{\mathbf{B}_i,2} \\
 1 & 1
\end{bmatrix} \bigg\|_2 \leq \sqrt{2 + 2\bar{\beta}}\\
&\|(\mathbf{V}_i )^{-1}\|_2 \leq \frac{1}{|\lambda_{\mathbf{B}_i,1}-\lambda_{\mathbf{B}_i,2}|} \bigg\| \begin{bmatrix}
 1 & -\lambda_{\mathbf{B}_i,2} \\
 -1 & \lambda_{\mathbf{B}_i,1} 
\end{bmatrix} \bigg\|_2 \leq \frac{\sqrt{2 + 2\bar{\beta}}}{2 (4\bar{\beta} - (\bar{\beta} - \mu s + \mu)^2)}
\end{aligned}
\]
Concatenate the blocks together:
\[\mathbf{B} = \begin{bmatrix}
\mathbf{B}_1 & & \\
&  \cdots &\\
& &  \mathbf{B}_{|S|}
\end{bmatrix}, 
\Lambda_{\mathbf{M}} = \begin{bmatrix}
\Lambda_{\mathbf{B}_1} & & \\
&  \cdots &\\
& &  \Lambda_{\mathbf{B}_{|S|}}
\end{bmatrix},
\mathbf{V} = \begin{bmatrix}
\mathbf{V}_1 & & \\
&  \cdots &\\
& &  \mathbf{V}_{|S|}.
\end{bmatrix}\]
We have
\[\mathbf{M} = \underbrace{\begin{bmatrix}
\bm{U}  &  \bm{0}\\
\bm{0}  & \bm{U}
\end{bmatrix}
\mathbf{P}
\mathbf{V}}_{\mathbf{T}}  
\Lambda_{\mathbf{M}} 
\underbrace{(\mathbf{V})^{-1}
\mathbf{P}^{-1}
\begin{bmatrix}
 \bm{U}^T & \bm{0}\\
\bm{0} & \bm{U}^T
\end{bmatrix}}_{\mathbf{T}^{-1}}  
\]
and upper bounds for 2-norms of $\mathbf{T}$ and $\mathbf{T}^{-1}$:
\[
\begin{aligned}
&\|\mathbf{T}\|_2 \leq  \|\mathbf{V}\|_2 \leq \max_{i=1,2,\cdots,|S|}  \|\mathbf{V}_i\|_2 \leq \sqrt{2 + 2\bar{\beta}}, \\
&\|\mathbf{T}^{-1}\|_2 \leq  \|\mathbf{V}^{-1}\|_2 \leq \max_{i=1,2,\cdots,|S|}  \|(\mathbf{V}_i)^{-1}\|_2 \leq \frac{\sqrt{2 + 2\bar{\beta}}}{2 (4\bar{\beta} - (\bar{\beta} - \mu s + \mu)^2)}.
\end{aligned}
\]
Claims (\ref{eq:proof-M-decompose}) and (\ref{eq:proof-M-decompose-bound}) are proved. With the analysis in step 3, we finish the whole proof.
\end{proof}

\section{Proof of Theorem \ref{prop:lista-momentum-superlinear}}

To start the proof, we define some constants depending on $\mu,s,B,\underline{B}$:
\begin{align}
    C_1 := & \max\Big(C_0, \frac{1 + \mu s - \mu}{1 - \mu s + \mu} \sqrt{s}B \Big), \label{eq:c_1-proof}\\
    \widetilde{K}_1 :=& \Big\lceil \frac{\log(\underline{B})-\log(2 C_1)}{\log(2\mu s - \mu)} \Big\rceil + 2, \label{eq:critical-k1}
\end{align}
In the statement of the theorem, we assume ``noise level $\sigma$ is small enough." Here we give the specific condition on $\sigma$:
\begin{equation}
    \label{eq:proof-sigma-condition2}
    \sigma \leq \min\Big( \frac{\underline{B}}{2}, \sqrt{s} B (2 \mu s - \mu)^{\widetilde{K}_1} \Big) \frac{1-2\mu s + \mu}{2sC_W}.
\end{equation}

\begin{proof}
We first define parameters we choose in the proof:
\begin{align}
        \theta^{(k)} =& \gamma^{(k)} \Big(\mu \|\bm{x}^{(k)}-\bm{x}^\ast\|_1 + C_W\sigma \Big), \label{eq:proof-theta-formula2}\\
        p^{(k)} =& \min\Big(\frac{k}{\widetilde{K}_0}n,n\Big),  \label{eq:proof-p-formula2}
\end{align}
The other two parameters $\gamma^{(k)},\beta^{(k)}$ follow (\ref{eq:proof-gamma-formula}), (\ref{eq:proof-beta-formula}) respectively as $k \leq \widetilde{K}_1$; $\gamma^{(k)},\beta^{(k)}$ follow the step size and momentum rules in Conjugate Gradient (CG) respectively as $k > \widetilde{K}_1$:
 \begin{equation}
        \label{eq:cg-para}
        \begin{aligned}
        \bm{r}^{(k)} =& (\bm{W}_{:,S})^T \Big(\bm{A}_{:,S}\bm{x}^{(k)}_S - \bm{b}\Big)\\
        \alpha^{(k)} =& {\|\bm{r}^{(k)}\|_2^2}/{\|\bm{r}^{(k-1)}\|_2^2}\\
        \bm{d}^{(k)} =& 
        \begin{cases}
        \bm{r}^{(k)}, & k = \widetilde{K}_1 + 1\\
        \bm{r}^{(k)} +  \alpha^{(k)} \bm{d}^{(k-1)}, & k > \widetilde{K}_1 + 1
        \end{cases}
        \\
        \gamma^{(k)} =& {\|\bm{r}^{(k)}\|_2^2}/\Big({ \big(\bm{W}_{:,S} \bm{d}^{(k)}\big)^T \bm{A}_{:,S}\bm{d}^{(k)}}\Big)\\
        \beta^{(k)} =& 
        \begin{cases}
        0, & k = \widetilde{K}_1 + 1\\
        \alpha^{(k)}{\gamma^{(k)}}/{\gamma^{(k-1)}}, & k > \widetilde{K}_1 + 1.
        \end{cases}
        \end{aligned}
    \end{equation}
    
As $k \leq \widetilde{K}_1$, we follow the same proof line with Theorem \ref{prop:alista-momentum} and obtain that $\bm{x}^{(k)}$ is close enough to $\bm{x}^*$ as $k = \widetilde{K}_1$ and it enters a small neighborhood of $\bm{x}^*$ where $\text{supp}(\bm{x}^{(k)}) = \text{supp}(\bm{x}^*)$. Moreover, the `no false positive'' conclusion also holds with the new instance-adaptive thresholds (\ref{eq:proof-theta-formula2}). Consequently, plugging (\ref{eq:cg-para}) into (\ref{eq:alista-momentum}), one can easily check that (\ref{eq:alista-momentum}) is equivalent with applying a conjugate gradient (CG) method starting from $k = \widetilde{K}_1 + 1$ on the linear system
\begin{equation}
    \label{eq:proof-cg-system}
    (\bm{W}_{:,S})^T\Big(\bm{A}_{:,S}\bm{x}_S-\bm{b}\Big) = \bm{0}.
\end{equation}
We parameterize our system in a symmetric way (Section \ref{sec:symmetric}): $(\bm{W}_{:,S})^T \bm{A}_{:,S} = (\bm{D}^T\bm{D})_{S,S}$. And  $(\bm{D}^T\bm{D})_{S,S}$ is positive definite according to (\ref{eq:circle-theorem-eigen-bound}) and $s < (1+1/\mu)/2$. Thus, the linear system (\ref{eq:proof-cg-system}) is a symmetric and positive definite system with solution $\overline{\bm{x}}_S$. It is well known that the conjugate gradient method on a symmetric and positive definite system has some important properties:
\begin{enumerate}
    \item $\|\bm{x}_S^{(k)}-\overline{\bm{x}}_S\|_2$ is not monotone, but $\|\mathbf{D}_{:,S}(\bm{x}_S^{(k)}-\overline{\bm{x}}_S)\|_2$ is monotonic nonincreasing.
    \item $\bm{x}_S^{(k)}$ stops at the solution in finite steps.
\end{enumerate}

With the first property, we use the same techniques with those in Step 3 in the proof of Theorem \ref{prop:alista-momentum} and obtain
\[\Big\|\bm{x}_S^{(k)}-\overline{\bm{x}}_S \Big\|_2 \leq \frac{1 + \mu s - \mu}{1 - \mu s + \mu} \Big\|\bm{x}_S^{(\widetilde{K}_1)}-\overline{\bm{x}}_S\Big\|_2 ,\quad\forall k > \widetilde{K}_1,\]and $\bm{v}^{(k)}=\bm{0}$ for all $k > \widetilde{K}_1$. Consequently, $\text{supp}(\bm{x}^{(k)}) = \text{supp}(\bm{x}^*)$ holds for all $k > \widetilde{K}_1$.

The second property indicates that there is a large enough $\widetilde{K}_2$ such that $\bm{x}^{(k)} = \overline{\bm{x}}_S$ for all $k > \widetilde{K}_2$. In another word, there is a sequence $\{\bar{c}^{(k)}\}$ such that 
\[\|\bm{x}^{(k)}-\overline{\bm{x}}_S\|_2 \leq \|\bm{x}^{(0)}-\overline{\bm{x}}_S\|_2 \prod_{t=0}^k \bar{c}^{(t)}\]
and it holds that
\[\bar{c}^{(k)} = 0, \quad \forall k > \widetilde{K}_2.\] Finally, $\|\bm{x}^{(k)}-\bm{x}^*\| \leq \|\bm{x}^{(k)}-\overline{\bm{x}}_S\| + \|\overline{\bm{x}}_S-\bm{x}^*\|_2$. Combining the above upper bound and (\ref{eq:delta-x-bound}), we obtain (\ref{eq:superlinear_conv}). Theorem \ref{prop:lista-momentum-superlinear} is proved.
\end{proof}

\section{Instance-Adaptive Parameters for ALISTA without Momentum}
Actually we can prove that ALISTA without momentum (setting $\beta^{(k)}=0$ for all $k$ in (\ref{eq:alista-momentum})) also converge superlinearly if the parameters are taken in an instance-adaptive way. We list the theorem bellow and will discuss why we choose ALISTA-momentum rather than ALISTA without momentum following Theorem \ref{prop:alista-superlinear} and its proof.

\begin{theorem}
\label{prop:alista-superlinear}
Let $\bm{x}^{(0)}=\bm{0}$ and $\{\bm{x}^{(k)}\}_{k=1}^{\infty}$ be generated by (\ref{eq:alista-momentum}) and set $\beta^{(k)}=0$ for all $k$. If Assumption \ref{assume:basic} holds and $s < (1+1/\mu)/2$ and $\sigma$ is small enough,
then there exists a sequence of parameters $\{\theta^{(k)}, p^{(k)}, \gamma^{(k)}\}_{k=1}^{\infty}$ for each instance $(\bm{x}^*,\varepsilon)\in \X(B,\underline{B},s,\sigma)$ so that we have:
\begin{equation}
\label{eq:superlinear_conv-no-momentum}
  \|\bm{x}^{(k)}-\bm{x}^\ast\|_2 \leq \sqrt{s} B \prod_{t=0}^{k}\hat{c}^{(t)}{+\frac{2sC_W}{1-2\mu s + \mu}\sigma},\quad \forall k = 1,2,\cdots,
\end{equation}
where the convergence rate $\hat{c}^{(k)}$ satisfies
\begin{equation}
    \label{eq:superlinear-c-nomomentum}
    \hat{c}^{(k)}\to0 ~~\mathrm{as}~~ k \to \infty.
\end{equation}
\end{theorem}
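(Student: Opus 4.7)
The plan is to mimic the two-phase structure of the proof of Theorem~\ref{prop:lista-momentum-superlinear}, replacing the conjugate-gradient second stage (which relied crucially on the momentum term) with an instance-adaptive scalar step-size schedule that exploits the spectrum of the $|S|\times|S|$ symmetric positive-definite matrix $\bm{H}:=(\bm{W}_{:,S})^T\bm{A}_{:,S}$. Concretely, for $k\leq \widetilde{K}_1$ I would take $\gamma^{(k)}=1$ together with $\theta^{(k)}$ and $p^{(k)}$ given by (\ref{eq:proof-theta-formula2}) and (\ref{eq:proof-p-formula2}). The ``no false positives'' argument in Step~1 of the proof of Theorem~\ref{prop:alista-momentum} is momentum-free and therefore carries over verbatim, and Step~2 of that proof then yields $\|\bm{x}^{(k)}-\bm{x}^*\|_2 \leq \sqrt{s}B(2\mu s-\mu)^k + \frac{2sC_W}{1-2\mu s+\mu}\sigma$ for $k\leq \widetilde{K}_1$. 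By the choice of $\widetilde{K}_1$ and the smallness hypothesis on $\sigma$, this guarantees $\mathrm{supp}(\bm{x}^{(\widetilde{K}_1)})=S$; since $p^{(k)}=n$ for $k\geq \widetilde{K}_1$, the support-selective thresholding is inactive on $S$ from that point on.

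Starting from $k=\widetilde{K}_1+1$, the update reduces to the purely linear recursion $\bm{x}_S^{(k+1)}-\overline{\bm{x}}_S=(\bm{I}_S-\gamma^{(k)}\bm{H})(\bm{x}_S^{(k)}-\overline{\bm{x}}_S)$, with $\overline{\bm{x}}_S$ defined as in (\ref{eq:proof-delta-x-define}). Let $\lambda_1\geq\lambda_2\geq\cdots\geq\lambda_{|S|}>0$ denote the eigenvalues of $\bm{H}$; positivity follows from the symmetric parameterization of Section~\ref{sec:symmetric} together with (\ref{eq:circle-theorem-eigen-bound}) under $s<(1+1/\mu)/2$. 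I then set $\gamma^{(\widetilde{K}_1+j)} = 1/\lambda_j$ for $j=1,\ldots,|S|$. The composite operator $\prod_{j=1}^{|S|}(\bm{I}_S-\lambda_j^{-1}\bm{H})$ annihilates every eigenvector of $\bm{H}$, so $\bm{x}_S^{(\widetilde{K}_1+|S|)}=\overline{\bm{x}}_S$ exactly. This gives $\hat{c}^{(k)}=0$ for every $k>\widetilde{K}_1+|S|$, trivially verifying (\ref{eq:superlinear-c-nomomentum}), and combining with (\ref{eq:delta-x-bound}) through $\|\bm{x}^{(k)}-\bm{x}^*\|_2\leq \|\bm{x}^{(k)}-\overline{\bm{x}}_S\|_2+\|\delta\bm{x}_S\|_2$ produces the claimed bound (\ref{eq:superlinear_conv-no-momentum}).

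The main obstacle is verifying that the hypothesis of the linear recursion, namely $\bm{v}^{(k)}=\bm{0}$, persists throughout the second phase; without momentum there is no built-in norm monotonicity, and a poorly chosen $\gamma^{(k)}$ could temporarily inflate $\|\bm{x}_S^{(k)}-\overline{\bm{x}}_S\|_2$ and reactivate the thresholding. The decreasing ordering of eigenvalues is what bypasses this: after the $j$-th Phase-2 step the residual lives in the invariant subspace spanned by $\bm{u}_{j+1},\ldots,\bm{u}_{|S|}$, on which the next operator $\bm{I}_S-\lambda_{j+1}^{-1}\bm{H}$ has spectral radius $\max_{\ell>j+1}|1-\lambda_\ell/\lambda_{j+1}|\leq 1-\lambda_{|S|}/\lambda_{j+1}<1$. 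Hence $\|\bm{x}_S^{(k)}-\overline{\bm{x}}_S\|_2$ is monotonically non-increasing throughout Phase~2, so $\|\bm{x}^{(k)}-\bm{x}^*\|_\infty<\underline{B}/2$ remains valid and the instance-adaptive thresholds $\theta^{(k)}=\gamma^{(k)}(\mu\|\bm{x}^{(k)}-\bm{x}^*\|_1+C_W\sigma)$ stay strictly below the surviving entries of $\bm{x}^*$, preserving both the no-false-positive bound and the no-false-negative condition needed for $\bm{v}^{(k)}=\bm{0}$. This closes the induction and completes the proof. The explicit dependence of the step sizes on the eigenvalues of $\bm{H}$ is precisely what makes this scheme less attractive in practice than the momentum-based CG variant of Theorem~\ref{prop:lista-momentum-superlinear}, which achieves the same superlinear behavior without any eigendecomposition of $\bm{H}$.
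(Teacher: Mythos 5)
Your proposal is correct and follows essentially the same route as the paper's proof: a first phase identical to the ALISTA linear-convergence argument with $\gamma^{(k)}=1$ and the instance-adaptive thresholds, followed by a second phase that sets the step sizes to the reciprocals of the eigenvalues of $(\bm{W}_{:,S})^T\bm{A}_{:,S}$ so that the composite operator annihilates the error exactly after $|S|$ steps. Your additional observation that the eigenvalues should be processed in decreasing order --- so that the operator restricted to the not-yet-annihilated invariant subspace always has spectral radius below one and $\|\bm{x}_S^{(k)}-\overline{\bm{x}}_S\|_2$ stays monotonically non-increasing, keeping $\bm{v}^{(k)}=\bm{0}$ throughout --- is a welcome refinement, since the paper merely asserts monotonicity from $\beta^{(k)}=0$, which for an arbitrary ordering of the $1/\lambda_i$ steps would not hold.
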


\begin{proof}
Similar with the previous two proofs, we introduce some constants to facilitate the proof.
\begin{equation}
    \label{eq:proof-k3-thm3}
    \widetilde{K}_3 := \Big\lceil \frac{\log(\underline{B})-\log(2\sqrt{s}B)}{\log(2\mu s - \mu)} \Big\rceil + 1.
\end{equation}
We assume that
\begin{equation}
    \label{eq:proof-sigma-thm3}
    \sigma \leq \min\Big( \frac{\underline{B}}{2}, \sqrt{s} B (2 \mu s - \mu)^{\widetilde{K}_3} \Big) \frac{1-2\mu s + \mu}{2sC_W}. 
\end{equation}
The parameters are taken as
\begin{align}
        \theta^{(k)} =& \gamma^{(k)} \Big(\mu \|\bm{x}^{(k)}-\bm{x}^\ast\|_1 + C_W\sigma \Big), \quad \forall k\label{eq:proof-theta-formula-thm3}\\
        p^{(k)} =& \min\Big(\frac{k}{\widetilde{K}_3}n,n\Big), \quad \forall k  \label{eq:proof-p-formula-thm3}\\
        \gamma^{(k)} = & \begin{cases}
        1, & k \leq \widetilde{K}_3\\
        1/\lambda_i, & k = \widetilde{K}_3 + i, 1\leq i \leq |S|\\
        \text{any real number}, & k \geq \widetilde{K}_3 + |S|
        \end{cases} \label{eq:proof-gamma-formula3}
\end{align}
where $\lambda_i$ is the $i$-th singular value of $(\bm{W}_{:,S})^T\bm{A}_{:,S}$.

With (\ref{eq:proof-k3-thm3}), (\ref{eq:proof-sigma-thm3}), (\ref{eq:proof-theta-formula-thm3}), (\ref{eq:proof-p-formula-thm3}), (\ref{eq:proof-gamma-formula3}), and following the same proof line as Theorem \ref{prop:alista-momentum}, one can prove that
\begin{itemize}
    \item No false positive: $x^{(k)}_i = 0$ for all $i \in S$ and $k\geq 0$.
    \item Linear convergence: $\|\bm{x}^{(k)}-\bm{x}^\ast\|_2 \leq \sqrt{s}B(2\mu s - \mu)^k + \frac{2sC_W}{1-2\mu s + \mu}\sigma$ for all $k \leq \widetilde{K}_3$.
\end{itemize}
Since $\beta^{(k)} = 0$, the metric $\|\bm{x}^{(k)}-\bm{x}^\ast\|_2$ must be monotone and, consequently, $\bm{v}^{(k)}=\bm{0}$ for all $k \geq \widetilde{K}_3$ because $\bm{x}^{(k)}$ will stay in the neighborhood that $\mathrm{supp}(\bm{x}^{(k)})=\mathrm{supp}(\bm{x}^\ast)$ once it enters the neighborhood. Thus, for $k > \widetilde{K}_3$, formula (\ref{eq:alista-momentum}) reduces to
\[\bm{x}_S^{(k+1)} = \bm{x}_S^{(k)} - \gamma^{(k)} \big(\bm{W}_{:,S}\big)^T\Big(\bm{A}_{:,S} \bm{x}_S^{(k)} - \bm{b}\Big). \]
Denoting $\bm{Q} = (\bm{W}_{:,S})^T\bm{A}_{:,S}$, plugging in (\ref{eq:proof-delta-x-define}), we obtain
\[
\begin{aligned}
\bm{x}^{(k+1)}_S =& \bm{x}^{(k)}_S - \gamma^{(k)} \bm{Q}(\bm{x}^{(k)}_S-\overline{\bm{x}}_S)\\
\bm{x}^{(k+1)}_S - \overline{\bm{x}}_S =& \bm{x}^{(k)}_S - \overline{\bm{x}}_S - \gamma^{(k)} \bm{Q}(\bm{x}^{(k)}_S-\overline{\bm{x}}_S)\\
\bm{x}^{(k+1)}_S - \overline{\bm{x}}_S =& \Big( \mathbf{I} - \gamma^{(k)} \bm{Q} \Big)(\bm{x}^{(k)}_S-\overline{\bm{x}}_S)
\end{aligned}
\]
Since $\bm{Q}$ is real symmetric, it can be orthogonal diagonalized: $\bm{Q} = \bm{U}\Sigma \bm{U}^T$, where $\bm{U}$ is an orthogonal matrix and $\Sigma = \mathrm{diag}(\sigma_1, \sigma_2, \cdots \sigma_n)$. Then,
\[
\begin{aligned}
\bm{x}^{(k+1)}_S - \overline{\bm{x}}_S =& \Big( \bm{U}\bm{U}^T - \gamma^{(k)} \bm{U}\Sigma \bm{U}^T \Big)(\bm{x}^{(k)}_S-\overline{\bm{x}}_S)\\
\bm{U}^T(\bm{x}^{(k+1)}_S - \overline{\bm{x}}_S) = & (\mathbf{I} - \gamma^{(k)} \Sigma) \bm{U}^T (\bm{x}^{(k)}_S-\overline{\bm{x}}_S)
\end{aligned}
\]
Denoting $\bm{y}^{(k)} = \bm{U}^T(\bm{x}^{(k)}_S-\overline{\bm{x}}_S)$, we have
\[
\begin{aligned}
\bm{y}^{(k+1)} =& (\mathbf{I} - \gamma^{(k)} \Sigma)\bm{y}^{(k)}.\\
y^{(k+1)}_i =& (1 - \gamma^{(k)} \sigma_i)y^{(k)}_i,\quad i = 1,2,\cdots, |S|.
\end{aligned}
\]
Plugging in the parameters $\gamma^{(k)} = 1/\sigma^i$ for $k = \widetilde{K}_3 + i$, we have 
\[
\begin{aligned}
y^{(k)}_1 =& 0,\quad \text{as }k \geq \widetilde{K}_3 + 2\\
y^{(k)}_2 =& 0,\quad \text{as }k \geq \widetilde{K}_3 + 3\\
& \cdots \\
y^{(k)}_{|S|} =& 0,\quad \text{as }k \geq \widetilde{K}_3 + |S| + 1\\
\end{aligned}
\]
That is, $\bm{y}^{(k)}=\bm{0}$ for $k \geq \widetilde{K}_3 + |S| + 1$. Consequently, $\bm{x}^{(k)}=\overline{\bm{x}}$ for $k \geq \widetilde{K}_3 + |S| + 1$. This conclusion, combined with (\ref{eq:delta-x-bound}), implies (\ref{eq:superlinear_conv-no-momentum}) and (\ref{eq:superlinear-c-nomomentum}) and finishes the proof.
\end{proof}

\subsection{Some discussions}
Although both ALISTA and ALISTA-momentum acheives superlinear convergence, ALISTA-momentum performs better than ALISTA. To achieve superlinear convergence, ALISTA has to take step size as $\gamma^{(k)} = 1/\lambda_i$ (\ref{eq:proof-gamma-formula3}) where $\lambda_i$ is one of the eigenvalues of $\bm{W}^T_{:,S}\bm{A}_{:,S}$. Such step size rule is not easy to calculate in practice and numerically unstable compared with the conjugate gradient in HyperLISTA. This is why we choose ALISTA-momentum and its instance-adaptive variant: HyperLISTA.

Theorem \ref{prop:alista-superlinear} itself shows that ALISTA with adaptive parameters converges faster than that with uniform parameters over instances. NA-ALISTA~\cite{behrens2021neurally} learns instance-adaptive parameter rule with LSTMs and shows good numerical performance. Theorem \ref{prop:alista-superlinear} actually provides a theoretical explanation for this approach.

\section{Derivation of the Support Selection Formula }
In this section, we provide the derivation of $p^{(k)}$ formula (\ref{eq:p_formula}).

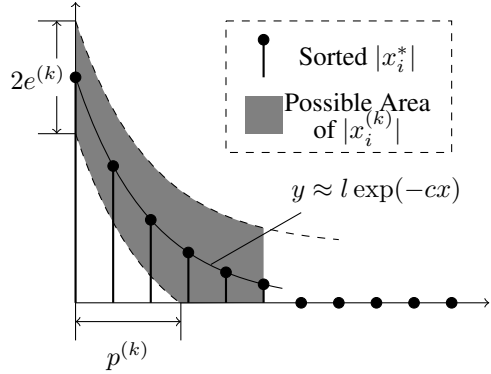
\begin{figure}[!h]
  \centering
    \begin{tikzpicture}

\path[fill=gray] plot[domain=0:2.5] (\x, {3 * exp(-\x) + 0.75})  -- (2.5,0) -- plot[domain=1.4:0] (\x, {3 * exp(-\x) - 0.75});

 

\draw[->] (0,0) -- (5.5,0);
\draw[->] (0,0) -- (0,4);
\draw[thick] (0,3) -- (0,0);
\filldraw[black] (0,3) circle (2pt) node[anchor=west] {};
\draw[thick] (0.5,1.8196) -- (0.5,0);
\filldraw[black] (0.5,1.8196) circle (2pt) node[anchor=west] {};
\draw[thick] (1,1.1036) -- (1,0);
\filldraw[black] (1,1.1036) circle (2pt) node[anchor=west] {};
\draw[thick] (1.5,0.6694) -- (1.5,0);
\filldraw[black] (1.5,0.6694) circle (2pt) node[anchor=west] {};
\draw[thick] (2,0.4060) -- (2,0);
\filldraw[black] (2,0.4060) circle (2pt) node[anchor=west] {};
\draw[thick] (2.5,0.2463) -- (2.5,0);
\filldraw[black] (2.5,0.2463) circle (2pt) node[anchor=west] {};
\filldraw[black] (3,0) circle (2pt) node[anchor=west] {};
\filldraw[black] (3.5,0) circle (2pt) node[anchor=west] {};
\filldraw[black] (4,0) circle (2pt) node[anchor=west] {};
\filldraw[black] (4.5,0) circle (2pt) node[anchor=west] {};
\filldraw[black] (5,0) circle (2pt) node[anchor=west] {};


\draw (-0.5,3.75) -- (0, 3.75);
\draw (-0.5,2.25) -- (0, 2.25);
\draw[->] (-0.25, 3.25) -- (-0.25, 3.75);
\draw[->] (-0.25, 2.75) -- (-0.25, 2.25);
\draw (-0.5, 3) node {$2 e^{(k)}$};

\draw (0,0) -- (0, -0.5);
\draw (1.4,0) -- (1.4, -0.5);
\draw[<->] (0, -0.25) -- (1.4,-0.25);
\draw (0.7, -0.7) node {$p^{(k)}$};

\draw[dashed] (2,3.75) -- (5,3.75) -- (5, 2) -- (2, 2) -- (2, 3.75);
\draw[thick] (2.5, 3.5) -- (2.5, 3);
\filldraw[black] (2.5,3.5) circle (2pt) node[anchor=west] {};
\draw (3.75, 3.25) node {Sorted $|x^\ast_i|$};
\path[fill=gray] (2.25,2.75) -- (2.75, 2.75) -- (2.75, 2.25) -- (2.25, 2.25);
\draw (3.75, 2.675) node {Possible Area };
\draw (3.75, 2.325) node {of $|x^{(k)}_i|$};

\draw[dashed]   plot[domain=0:3.5] (\x, {3 * exp(-\x) + 0.75});
\draw[dashed]   plot[domain=0:1.4] (\x, {3 * exp(-\x) - 0.75});

\draw   plot[domain=0:2.75] (\x, {3 * exp(-\x)});

\draw (4,1.5) node {$y \approx l\exp(-c x)$};

\draw (3,1.2) -- (1.8,0.51);

\end{tikzpicture}
    \caption{Choice of support selection (with more details)}
    \label{fig:diag-ss-app}
\end{figure}

We put the diagram of support selection (Figure \ref{fig:diag-ss}) with more details in Figure \ref{fig:diag-ss-app}. To estimate $p^{(k)}$, one should make assumptions on the distribution of the magnitudes of $x^*_i$. We assume that the nonzero entries of $\bm{x}^*$ follows normal distribution. The possibility of $x^*_i$ having a large magnitude is smaller than the possibility of $x^*_i$ having a small magnitude. Thus, the curve connecting ``sorted $|x^*_i|$'' in Figure \ref{fig:diag-ss-app} should be a convex function. Here we take exponential function $y = l \exp(-cx)$ to approximate the distribution. The intersection of the curve $y = l \exp(-c x)$ and the $y$-axis is $l = \max_{i}(x^*_i)$, i.e., $l = \|\bm{x}^*\|_{\infty}$. The lower bound of the gray area in Figure \ref{fig:diag-ss-app} is a shift of $y = l \exp(-c x)$. Thus, the relationship between $e^{(k)},p^{(k)}$ can be formulated as
\[\|\bm{x}^*\|_{\infty}\exp(-c p^{(k)}) - e^{(k)} = 0.\]
Equivalently, 
\[p^{(k)} = \frac{1}{c} \log\bigg( \frac{\|\bm{x}^*\|_{\infty}}{e^{(k)}} \bigg) = \frac{1}{c} \log\bigg( \frac{\|\bm{x}^*\|_{\infty}}{\|\bm{x}^{(k)}-\bm{x}^*\|_{\infty}} \bigg).\]
Constant $c$ depends on the distribution of $\bm{x}^*$. In practice, we use $c_3 = 1/c$ as the hyperparameter to tune. Furthermore, we use $\|\bm{A}^{+}\bm{b}\|_1$ to estimate $\|\bm{x}^*\|$ and use residual $\|\bm{A}^{+}(\bm{A}\bm{x}^{(k)}-\bm{b})\|_1$ to estimate the error $\|\bm{x}^{(k)}-\bm{x}^*\|$. Then formula (\ref{eq:p-normal}) is obtained. With an upper bound of $n$, (\ref{eq:p_formula}) is obtained.

\end{document}